\documentclass{article}
\usepackage[final,main,nonatbib]{neurips_2026}

\usepackage{caption}
\usepackage[utf8]{inputenc}
\usepackage[T1]{fontenc}
\usepackage{xcolor}
\definecolor{anonurl}{RGB}{120,60,150}
\definecolor{citeblue}{HTML}{1F63A8}
\definecolor{refred}{HTML}{B03A2E}
\usepackage[colorlinks=true,
            linkcolor=refred,
            citecolor=citeblue,
            urlcolor=citeblue]{hyperref}
\usepackage{url}
\usepackage[numbers,sort&compress]{natbib}
\usepackage{booktabs}
\usepackage{multirow}
\usepackage{amsfonts,amsmath,amssymb,amsthm}
\usepackage{mathtools}
\usepackage{nicefrac}
\usepackage{microtype}
\usepackage{algorithm}
\usepackage{algorithmic}
\usepackage{etoc}
\usepackage{tocloft}
\usepackage{cleveref}
\usepackage{wrapfig}
\usepackage{subcaption}
\usepackage{graphicx}
\crefname{assumption}{Assumption}{Assumptions}
\Crefname{assumption}{Assumption}{Assumptions}
\newtheorem{theorem}{Theorem}
\newtheorem{proposition}{Proposition}

\newtheorem{assumption}{Assumption}
\newtheorem{remark}{Remark}

\DeclareMathOperator{\Var}{Var}

\DeclareMathOperator{\Tr}{Tr}

\DeclareMathOperator{\supp}{supp}

\newcommand{\E}{\mathbb{E}}
\newcommand{\R}{\mathbb{R}}
\newcommand{\cD}{\mathcal{D}}
\newcommand{\cT}{\mathcal{T}}
\newcommand{\cS}{\mathcal{S}}
\newcommand{\cA}{\mathcal{A}}
\newcommand{\cL}{\mathcal{L}}
\newcommand{\cN}{\mathcal{N}}

\newcommand{\cI}{\mathcal{I}}
\newcommand{\cH}{\mathcal{H}}
\newcommand{\cM}{\mathcal{M}}

\newcommand{\zz}{\mathbf{z}}
\newcommand{\cZ}{\mathcal{Z}}

\title{Pessimistic Latent Task-aware Optimization \\ for Robust Offline Meta-Reinforcement Learning}

\author{%
  Chang-Hoon Jeong$^{1}$ \quad Kisung Shin$^{2}$ \quad Hanwool Sul$^{2}$ \quad
  Byoung-Tak Zhang$^{1,2}$\thanks{Corresponding author.} \\[0.35em]
  $^{1}$Dept.\ of Computer Science and Engineering, Seoul National University \\
  $^{2}$Interdisciplinary Program in Artificial Intelligence, Seoul National University \\[0.25em]
  \texttt{\{kevinjeong, virtualkss, kangsul, btzhang\}@snu.ac.kr}
}

\begin{document}
\setcounter{footnote}{1}
\maketitle
\setcounter{footnote}{0}

\begin{abstract}
    Context-based offline meta-reinforcement learning (COMRL) has advanced almost entirely through better task representations, which separate task identity from artifacts of the behavior policy so that the latent reflects the task alone. The implicit expectation has been that a clean representation will transfer to out-of-support (parameter-extrapolation) tasks. On such a task, however, the encoder produces a latent outside the training support, where the policy has never been trained. We address this gap with Pessimistic Latent Task-aware Optimization (PLATO), which exposes the policy to off-support latents during training. PLATO assumes a mild manifold hypothesis: task latents lie on a low-dimensional manifold, and out-of-support tasks lie farther from the training centroid. It exploits this geometry by perturbing inferred latents outward from the centroid to synthesize counterfactual tasks. A learned decoder ensemble then rolls out short trajectories at each perturbed latent, and the policy is updated on the imagined transitions under a pessimism weight that down-weights steps on which the ensemble disagrees. We prove that the perturbation leaves the training support by a controllable margin and that the out-of-support generalization gap is bounded by the same perturbation reach, and we give explicit conditions, verified empirically, under which the epistemic disagreement at the perturbed latent stays bounded away from zero. On eight MuJoCo continuous-control benchmarks, PLATO consistently improves out-of-support returns over strong COMRL baselines while remaining competitive on in-distribution tasks. Our code is available at \url{https://github.com/deephoony/plato}.
\end{abstract}

\section{Introduction}
\label{sec:intro}

Building reinforcement learning agents that adapt to a new task from little experience is a long-standing goal of meta-learning~\citep{finn2017maml,rakelly2019efficient,zintgraf2020varibad}. In domains where online interaction is costly or unsafe (robotics, healthcare, autonomous driving), agents must instead learn from a fixed batch of pre-collected experience~\citep{levine2020offline,wu2019behavior}. Offline meta-reinforcement learning (OMRL) combines the two requirements: an agent is trained on a fixed collection of datasets and must adapt to a new task without further interaction~\citep{li2020focal,dorfman2021offline,mitchell2021macaw}. The dominant paradigm is context-based~\citep{rakelly2019efficient,zintgraf2020varibad}: a context encoder $q_\phi(\zz \mid c)$ summarizes a batch of transitions from the new task into a latent code $\zz$, and a shared policy $\pi_\theta(a \mid s, \zz)$ acts on that code. Adaptation thus reduces to inferring $\zz$ from a small context buffer.

Most recent progress in COMRL has come from improving the encoder, with the shared goal of extracting task identity from the context while suppressing the signature of the behavior policy that collected the data. These include contrastive structure on the latent~\citep{li2020focal,yuan2022corro}, information-theoretic objectives bounding the mutual information between $\zz$ and task-relevant transitions~\citep{li2024unicorn}, and max-min mechanisms that directly minimize the mutual information between $\zz$ and the behavior policy~\citep{gao2023csro,nakhaei2025ertrl}. Later work extends these ideas in orthogonal directions~\citep{zhou2024gentle,zhang2025causalcomrl,he2026ctmrl,zhang2025taskshift}. In all of these, the implicit expectation is that a clean representation will, by itself, transfer beyond the training range.

\begin{wrapfigure}{r}{0.5\textwidth}
    \centering
    \vspace{-10pt} 
    \includegraphics[width=0.48\textwidth]{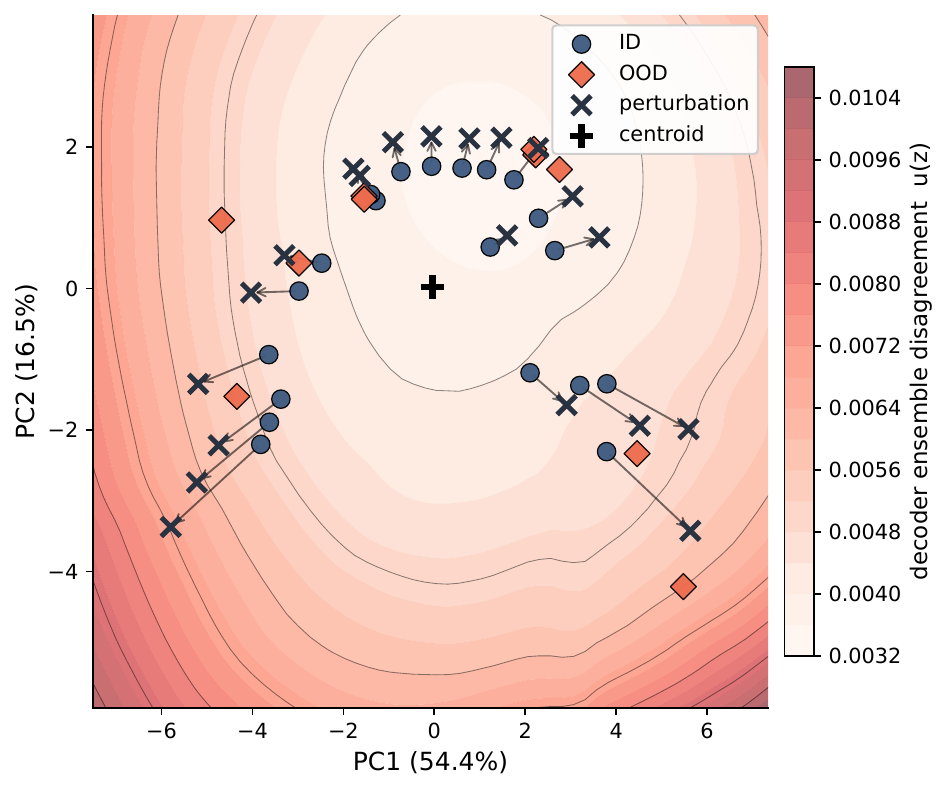}
    \caption{\textbf{Latent geometry on Ant-Goal.} Background: decoder-ensemble disagreement $u(\mathbf{z})$ over the latent space projected by principal component analysis (PCA). In-distribution (ID) tasks (blue) populate the low-$u$ region around the centroid $c$ ($+$). Out-of-support tasks (orange, labeled OOD) exhibit substantially higher disagreement on average, and our perturbations $\mathbf{z} \to \mathbf{z}^{\text{aug}}$ (arrows, $\times$) reach the same high-$u$ outward region, consistent with \Cref{asm:manifold}, \Cref{prop:outward}, and \Cref{thm:nonvanish}.}
    \label{fig:latent_geometry}
    \vspace{-10pt}
\end{wrapfigure}

This expectation overlooks a structural mismatch between the encoder and the policy. During training, the encoder sees a wide range of contexts, including those from tasks at the edge of the training range. The policy does not: it only ever sees latents inferred from training tasks, all of which lie inside the training latent support $\cZ_\text{train} := \supp(q_\phi)$. On an out-of-support (parameter-extrapolation) task, the encoder produces a latent outside $\cZ_\text{train}$, and the policy must act on an input it has never seen. Improving the encoder does not close this gap; a tighter $\cZ_\text{train}$ widens it rather than narrowing it. What the policy needs is training-time exposure to latents outside $\cZ_\text{train}$, which means synthesizing them.

Synthesis requires a direction. We assume a mild manifold hypothesis on the task family: task latents lie on a low-dimensional manifold, and out-of-support tasks lie farther from the training centroid (\Cref{asm:manifold}, \Cref{fig:latent_geometry}). Under this hypothesis, the natural choice is to push inferred latents outward from the centroid, which is the idea behind Pessimistic Latent Task-aware Optimization (PLATO). Each inferred latent $\zz_i$ is perturbed outward to a latent $\zz_i^\text{aug}$ that lies off the training support by a controllable margin (\Cref{sec:lore}, \Cref{prop:outward}). A learned decoder ensemble~\citep{lakshminarayanan2017simple,chua2018pets} then rolls out short trajectories at $\zz_i^\text{aug}$, producing imagined transitions in a counterfactual task just beyond the training range (\Cref{sec:rollout}). Because the rollouts are exploratory by design, the policy is trained on them only under a pessimism weight that decreases with the ensemble disagreement at each step (\Cref{sec:policy}, \Cref{thm:nonvanish}), and by default the critic is trained on real transitions alone (\Cref{thm:decoupling}). \Cref{fig:latent_geometry} illustrates this on Ant-Goal: on average, out-of-support tasks (orange) lie in the outward region of higher disagreement, and our perturbations $\zz \to \zz^\text{aug}$ reach the same region.

We make four contributions. (i) We identify a structural gap in COMRL: representation-focused methods give the policy no training-time exposure to off-support latents, the inputs that out-of-support tasks present to it. (ii) We formalize a manifold hypothesis on the geometry of unseen tasks and use it to design PLATO, combining a centroid-outward latent perturbation with a pessimism-weighted policy update on imagined rollouts. (iii) We analyze the design (\Cref{sec:analysis}): the perturbation reach controls both the off-support margin and a bound on the out-of-support value gap, the off-support disagreement persists under explicit conditions, and imagined data in the critic can incur a bias linear in the model error. (iv) Against five COMRL baselines, PLATO attains the highest average out-of-support return on all eight MuJoCo task families and remains competitive in distribution.

\section{Preliminaries}
\label{sec:background}

We consider a task distribution $p(\cT)$ over Markov decision processes (MDPs) $\cT_i = (\cS, \cA, T_i, R_i, \gamma)$ that share states and actions but differ in transitions or rewards. For each of $N$ training tasks, we are given an offline dataset $\cD_i = \{(s, a, r, s')\}$ collected by a behavior policy $\pi_\beta^i$. At test time, the agent observes a context $c$ from a new task, encodes it as $\zz \sim q_\phi(\zz \mid c)$, and acts with $\pi_\theta(a \mid s, \zz)$. We write $\cZ_\text{train} := \supp(q_\phi)$ for the training latent support and call $\zz \notin \cZ_\text{train}$ off-support; $\zz^\text{aug}$ denotes a latent synthesized by the perturbation of \Cref{sec:lore}. Out-of-support tasks are tasks whose parameters lie beyond the range covered by the training tasks; PLATO targets this kind of extrapolation within a task family, and transfer to qualitatively different task families is outside the scope of this work. As our actor--critic backbone we use BRAC~\citep{wu2019behavior}, which augments the standard actor--critic objective with a dual-form divergence penalty that keeps $\pi_\theta$ close to the behavior distribution.

\paragraph{What kind of uncertainty PLATO needs.}
PLATO trains the policy on imagined transitions generated at off-support latents $\zz^\text{aug}$. These transitions come from a decoder ensemble, a set of $M$ learned dynamics-and-reward models conditioned on the task latent (\Cref{sec:backbone}). The ensemble has never been trained on inputs at $\zz^\text{aug}$, so its predictions there can be unreliable, and the policy must be told how much to trust each imagined step. Predictive uncertainty has two sources~\citep{depeweg2018decomposition,gal2016dropout}: aleatoric uncertainty, the irreducible noise of the environment, and epistemic uncertainty, which arises because the model has not seen enough data. Only the epistemic component carries the signal PLATO needs. Aleatoric noise is the same on training and out-of-support tasks; what changes off-support is how tightly the data constrain the model.

\paragraph{The epistemic component as mutual information.}
We make this precise in the standard way. Let the next state $s'$ be predicted by a decoder with parameters $\theta$, whose posterior given the offline data is $p(\theta \mid \cD)$. The total predictive uncertainty at an input $(s, a, \zz)$ is the marginal entropy $\cH(s' \mid s, a, \zz)$, which averages over both $\theta$ and the noise of $s'$ given $\theta$. The aleatoric part is the uncertainty that remains once $\theta$ is fixed, $\E_{p(\theta \mid \cD)}[\cH(s' \mid s, a, \zz, \theta)]$. Subtracting it from the total isolates the epistemic part, which by the chain rule of entropy is exactly the mutual information (MI) $\cI(s'; \theta \mid s, a, \zz)$:
\begin{equation}
  \cI(s';\, \theta \mid s, a, \zz)
  \;=\; \underbrace{\cH(s' \mid s, a, \zz)}_{\text{total}}
  \;-\; \underbrace{\E_{p(\theta \mid \cD)}\!\bigl[\cH(s' \mid s, a, \zz, \theta)\bigr]}_{\text{aleatoric}}.
  \label{eq:epistemic}
\end{equation}
The mutual information is large exactly when the data fail to pin down $\theta$ at $(s, a, \zz)$. It is therefore small for $\zz$ inside $\cZ_\text{train}$ and large for off-support $\zz$, which is the property PLATO exploits.

\paragraph{Practical estimation.}
Computing \Cref{eq:epistemic} exactly requires the posterior $p(\theta \mid \cD)$, which is intractable. We approximate the posterior with a decoder ensemble~\citep{lakshminarayanan2017simple} of $M$ members trained on bootstrap resamples of the data~\citep{chua2018pets}. Under a Gaussian decoder likelihood, the predictive variance across members is a consistent, monotone proxy for \Cref{eq:epistemic} (proof in \Cref{app:mi_proxy}). PLATO never estimates the mutual information itself; it uses only this variance proxy at $\zz^\text{aug}$, as the uncertainty signal in the pessimism weight (\Cref{sec:policy}).

\section{Method}
\label{sec:method}

\begin{figure}[t]
\centering
\includegraphics[width=\linewidth]{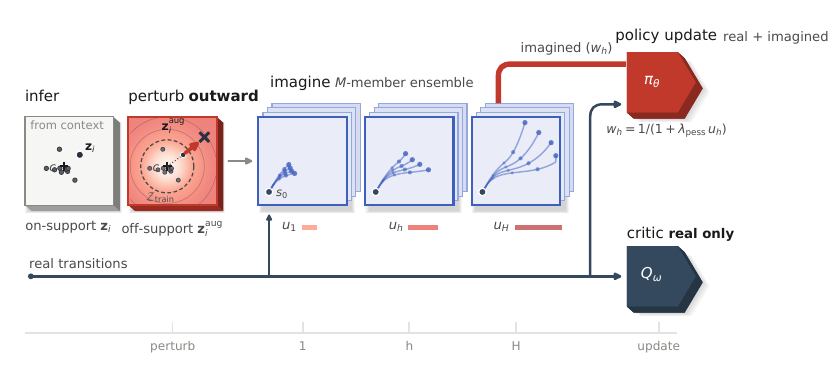}
\caption{\textbf{Conceptual overview of PLATO.} From left to right: a latent $\zz_i$ is inferred from the context of a training task; it is pushed outward from the training centroid $c$ to $\zz_i^\text{aug}$, off the training support $\cZ_\text{train}$ (\Cref{asm:manifold}, \Cref{prop:outward}); and the $M$-member decoder ensemble, conditioned on $\zz_i^\text{aug}$, rolls out a short imagined trajectory from a real state $s_0$, recording the per-step disagreement $u_h$ as its members drift apart. The policy $\pi_\theta$ is updated on real transitions and on the imagined ones, each imagined step weighted by $w_h = 1/(1+\lambda_\text{pess} u_h)$, while the critic $Q_\omega$ is trained on real transitions only (\Cref{thm:decoupling}).}
\label{fig:overview}
\end{figure}

PLATO builds on a representation backbone (\Cref{sec:backbone}) and a manifold hypothesis on the task family (\Cref{sec:hypothesis}), and adds three algorithmic components: a radial outward perturbation that synthesizes counterfactual out-of-support tasks (\Cref{sec:lore}), an imagined rollout of the decoder ensemble at the perturbed latent (\Cref{sec:rollout}), and a pessimism-weighted policy update on the resulting transitions (\Cref{sec:policy}). \Cref{fig:overview} gives a conceptual overview of this training-time pipeline, and \Cref{app:algorithm} gives the full meta-training and meta-testing procedures with pseudocode.

\subsection{Representation backbone}
\label{sec:backbone}
PLATO is agnostic to the choice of representation backbone, but it presupposes a backbone whose latent geometry reflects task identity rather than the behavior policy that collected the data. Since the perturbation in \Cref{sec:lore} uses only batch statistics of the inferred latents and the decoder ensemble takes $\zz$ as an input, PLATO composes with any context encoder; throughout the paper we use the CSRO backbone. Following~\citep{gao2023csro}, we train the context encoder $q_\phi(\zz \mid c)$ with a FOCAL contrastive loss~\citep{li2020focal} to separate task embeddings and with a CLUB upper bound on $\cI(\zz; s, a)$~\citep{cheng2020club} to suppress behavior-policy shortcuts; both objectives, and the BRAC penalty of the actor--critic, are stated in \Cref{app:impl}. In parallel, an ensemble of $M$ bootstrapped decoders $\{f_{\psi_m}\}_{m=1}^M$~\citep{chua2018pets,lakshminarayanan2017simple,an2021edac,ghasemipour2022msg} is fit by squared-error regression on real transitions to predict the normalized next-state delta and reward, $y = \bigl((s' - s)/\sigma_s,\, (r - \mu_r)/\sigma_r\bigr)$, conditioned on $\zz$. The ensemble plays two roles in PLATO: it serves as the rollout model in \Cref{sec:rollout}, and its across-member variance over both output channels is the disagreement signal that drives the pessimism weight in \Cref{sec:policy}. In the default pipeline, the decoded reward enters only this signal, because imagined transitions are used in the actor objective alone; the Q-synth ablation of \Cref{sec:experiments} additionally uses it as the reward of the imagined Bellman target. Architecture details and optimizer settings are in \Cref{app:impl}.

\subsection{Manifold hypothesis on the task family}
\label{sec:hypothesis}

Synthesizing off-support latents requires an assumption about where out-of-support tasks lie relative to the training tasks. We adopt a mild geometric assumption that captures how out-of-support tasks typically arise: they extrapolate beyond a contiguous region of training tasks rather than interpolate between them. Let the task family be parameterized by an intrinsic $\theta \in \Theta \subset \R^k$ with $k$ small. The trained encoder defines a map $\phi : \Theta \to \R^d$, $\theta \mapsto \E[q_\phi(\cdot \mid c_\theta)]$, that takes each task parameter to its mean latent, where $c_\theta$ is a context from task $\theta$.

\begin{assumption}[Manifold hypothesis]
\label{asm:manifold}
The map $\phi$ is a smooth injection, and the latent task manifold $\cM_z := \phi(\Theta)$ has intrinsic dimension $k$. Let $\Theta_\text{train} \subsetneq \Theta$ be the training tasks, $\Theta_\text{out} := \Theta \setminus \Theta_\text{train}$ the out-of-support tasks, and $\cM_\text{train} := \phi(\Theta_\text{train})$ and $\cM_\text{out} := \phi(\Theta_\text{out})$ their latent images. With $c := \E_{\theta \sim \Theta_\text{train}}[\phi(\theta)]$ the training centroid, out-of-support latents lie farther from $c$ on average:
\[
  \E_{\zz \in \cM_\text{out}}[\|\zz - c\|]
  \;\ge\; \E_{\zz \in \cM_\text{train}}[\|\zz - c\|].
\]
\end{assumption}

\Cref{asm:manifold} packages three properties: low intrinsic dimension, smoothness, and outwardness. Low intrinsic dimension is the standard manifold hypothesis~\citep{fefferman2016testing,bengio2013representation} applied to the geometry of tasks rather than data points, reflecting that task families are typically generated by a small number of underlying physical or semantic factors. Smoothness is enforced at training time by the contrastive and mutual-information objectives of \Cref{sec:backbone}, which shape $\phi$ into a Lipschitz embedding of the task parameters. Outwardness reflects how out-of-support tasks typically arise in practice: a policy trained for a band of operating conditions encounters values beyond that band rather than between them. The assumption constrains only the \emph{average} radial position of out-of-support latents; it does not require the manifold to be globally flat or the encoder to be globally Euclidean. It is a sufficient condition for the outward direction to point toward held-out tasks, not a requirement of the method as a whole: the pessimism weight depends on ensemble disagreement rather than on latent geometry, and PLATO retains its advantage on a benchmark where the assumption is not confirmed (\Cref{app:env-pca}). \Cref{fig:latent_geometry} illustrates the geometry on Ant-Goal, and \Cref{app:env-pca} tests the assumption directly.

\subsection{Radial outward perturbation}
\label{sec:lore}

The perturbation rule must satisfy two properties. First, it should push each $\zz_i$ outward from the training centroid, so that the perturbed latent lands in $\cM_\text{out}$ under \Cref{asm:manifold}. Second, it should spend most of the perturbation budget on samples already close to the boundary, since the policy will be tested on $\cM_\text{out}$, not in the interior.

We summarize a meta-batch of $B$ inferred latents by their centroid $c := \tfrac{1}{B}\sum_i \zz_i$, and let $d_i := \|\zz_i - c\|$, $\bar d := \tfrac{1}{B}\sum_i d_i$, $\sigma_d$ the standard deviation of $\{d_i\}$, and $\hat n_i := (\zz_i - c)/d_i$ the outward unit direction. The perturbed latent is
\begin{equation}
  \zz_i^\text{aug} \;=\; \zz_i 
  \;+\; \alpha \,\sigma_d \,
        \underbrace{(d_i / \bar d)^p}_{\text{edge weight}} \,
        \hat n_i.
  \label{eq:lore}
\end{equation}
The unit direction $\hat n_i$ points away from the centroid, so the perturbation only increases the centroid distance, to $\|\zz_i^\text{aug} - c\| = d_i + \alpha\sigma_d(d_i/\bar d)^p$. The edge weight $(d_i / \bar d)^p$ scales the perturbation with how peripheral the sample already is: with the default $p = 2$, a latent at twice the mean distance moves four times as far as one at the mean distance, so interior samples receive small perturbations and boundary samples large ones. The local scale $\sigma_d$ normalizes the perturbation to the spread of the current batch, so $\alpha$ has a consistent meaning across encoders. The construction uses batch statistics alone and requires no task labels.

The two hyperparameters $\alpha > 0$ and $p \ge 1$ control the perturbation reach and the concentration on boundary samples, respectively. \Cref{prop:outward} gives the threshold at which $\zz_i^\text{aug}$ leaves $\cZ_\text{train}$, and \Cref{rem:lore_bound} shows that, once the perturbation crosses that threshold, the off-support disagreement at $\zz_i^\text{aug}$ stays positive wherever the margin condition of \Cref{thm:nonvanish} holds.

\subsection{Imagined rollout and the disagreement signal}
\label{sec:rollout}

We treat the decoder ensemble as a task-conditioned rollout model. Starting from a real state $s_0$ of task $i$ and holding the latent fixed at $\zz^\text{aug} = \zz_i^\text{aug}$, we unroll a short trajectory of length $H$: for $h = 0, \ldots, H-1$,
\begin{equation}
  \begin{gathered}
  a_h \sim \pi_\theta(\cdot \mid s_h, \zz^\text{aug}), \qquad
  s_{h+1} \;=\; s_h + \sigma_s \odot \bar f^{(\Delta s)}(s_h, a_h, \zz^\text{aug}), \\
  \hat r_h \;=\; \mu_r + \sigma_r\, \bar f^{(r)}(s_h, a_h, \zz^\text{aug}),
  \end{gathered}
  \label{eq:rollout}
\end{equation}
where $\bar f = \tfrac{1}{M}\sum_m f_{\psi_m}$ is the ensemble mean, $\bar f^{(\Delta s)}$ and $\bar f^{(r)}$ are its state-delta and reward channels, and $\sigma_s$, $\mu_r$, and $\sigma_r$ are the normalization constants of $y$, computed once from the offline data. At each step we also record the across-member variance,
\begin{equation}
  u_h \;=\; \tfrac{1}{d_s + 1}\sum_{j} \Var_m\bigl[f_{\psi_m}^{(j)}(s_h, a_h, \zz^\text{aug})\bigr],
  \label{eq:u}
\end{equation}
where $d_s = \dim(\cS)$ and the sum runs over the $d_s$ state-delta channels and the reward channel, so reward uncertainty is part of the signal. By \Cref{prop:variance_mi} (proved in \Cref{app:mi_proxy}), $u_h$ is a monotone proxy for the epistemic mutual information at $(s_h, a_h, \zz^\text{aug})$, and \Cref{thm:nonvanish} gives conditions under which it stays bounded away from zero off the training support, so that it remains informative where PLATO operates.

\subsection{Pessimism-weighted policy improvement}
\label{sec:policy}

The imagined rollouts at $\zz^\text{aug}$ leave the support of the offline data: they generate transitions in counterfactual tasks the agent has never seen. PLATO therefore uses them pessimistically. The policy is updated on each imagined transition only in proportion to the model's confidence at that step, weighted by
\begin{equation}
  w_h \;=\; \frac{1}{1 + \lambda_\text{pess}\,u_h},
  \label{eq:weight}
\end{equation}
which discounts each step by a monotone transformation of the MI proxy; the hyperparameter $\lambda_\text{pess}$ controls the strength of the pessimism. The weight stays close to one where the members agree and decays as $1/(\lambda_\text{pess} u_h)$ where they disagree strongly. The actor loss combines a real-data term and a weighted synthetic-rollout term, $\cL_\pi(\theta) = \cL_\pi^\text{real}(\theta) + \beta_\text{synth}\,\cL_\pi^\text{synth}(\theta)$, with
\begin{align}
  \cL_\pi^\text{real}(\theta) &= \E_{(s, a) \sim \cD}\!\left[\eta\log\pi_\theta(a' \mid s, \zz) - Q_\omega(s, a', \zz) + \alpha_\text{BRAC}\, D_\text{BRAC}(s, a', a; \zz)\right], \label{eq:actor_real} \\
\cL_\pi^\text{synth}(\theta) &= \frac{1}{H}\sum_{h=0}^{H-1}\E_{s_h}\!\left[w_h\bigl(\eta\,\log\pi_\theta(a'_h \mid s_h, \zz^\text{aug}) - Q_\omega(s_h, a'_h, \zz^\text{aug})\bigr)\right], \label{eq:actor_synth}
\end{align}
where $a' \sim \pi_\theta(\cdot \mid s, \zz)$, $a'_h \sim \pi_\theta(\cdot \mid s_h, \zz^\text{aug})$, $Q_\omega$ is the minimum of the twin critics, and $\eta$ is the entropy temperature. By default, the critic is trained only on real transitions,
\begin{equation}
  \cL_Q(\omega_k) \;=\; \E_{(s, a, r, s') \sim \cD}\bigl[\bigl(Q_{\omega_k}(s, a, \zz) - r - \gamma\,\bar V(s', \zz)\bigr)^2\bigr], \qquad k \in \{1, 2\},
  \label{eq:critic}
\end{equation}
where $\bar V$ is a slowly updated target value network (reward scaling and terminal masking omitted), so imagined transitions never enter a Bellman target; \Cref{thm:decoupling} quantifies the bias of relaxing this default, which we examine empirically in \Cref{sec:experiments}.

\section{Analysis}
\label{sec:analysis}

We establish four properties of PLATO. The perturbation rule in \Cref{eq:lore} pushes latents off the training support by a controllable margin (\Cref{prop:outward}); under conditions we make explicit, the resulting epistemic disagreement stays bounded away from zero (\Cref{thm:nonvanish}), so the pessimism weight in \Cref{eq:weight} stays informative; mixing imagined data into the critic introduces a bias linear in the rollout-model error (\Cref{thm:decoupling}), motivating the real-only default; and the out-of-support generalization gap is bounded by the perturbation reach (\Cref{prop:gengap}). Full proofs are in \Cref{app:proofs}.

\subsection{The perturbation pushes latents off the training support}
\label{sec:analysis_outward}

\begin{proposition}[Off-support guarantee]
\label{prop:outward}
Let $\zz_i \in \cZ_\text{train}$ with $d_i = \|\zz_i - c\| > 0$, and let $\zz_i^\text{aug}$ be defined by \Cref{eq:lore} with parameters $(\alpha, p)$ and batch statistics $(c, \bar d, \sigma_d)$. Then $\|\zz_i^\text{aug} - c\| = d_i + \alpha\,\sigma_d\,(d_i/\bar d)^p$. If $\cZ_\text{train} \subseteq B(c, R)$ for some $R > 0$, then $\zz_i^\text{aug} \notin \cZ_\text{train}$ whenever $\alpha\,\sigma_d\,(d_i/\bar d)^p > R - d_i$. For any peripheral $\zz_i$ with $d_i \ge \bar d$, the choice $\alpha \ge (R - d_i)/\sigma_d$ suffices.
\end{proposition}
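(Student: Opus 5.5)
The proof is a direct computation in three steps, all following from the definition in \Cref{eq:lore}.

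\textbf{Step 1: the exact distance.} Write $\delta_i := \alpha\,\sigma_d\,(d_i/\bar d)^p$, which is nonnegative since $\alpha > 0$, $\sigma_d \ge 0$ and $d_i, \bar d > 0$. Here $\bar d > 0$ because $d_i > 0$ is one of the summands. Since $\zz_i - c = d_i \hat n_i$ and $\|\hat n_i\| = 1$, we get
\[
\zz_i^\text{aug} - c = (d_i + \delta_i)\,\hat n_i .
\]
Taking norms gives $\|\zz_i^\text{aug} - c\| = d_i + \delta_i$. No absolute value is needed, because $d_i + \delta_i \ge 0$.

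\textbf{Step 2: off-support by containment.} Assume $\cZ_\text{train} \subseteq B(c,R)$. I read $B(c,R)$ as the closed ball, which is the weaker hypothesis; the argument also works for the open ball. If $\delta_i > R - d_i$, then $\|\zz_i^\text{aug} - c\| = d_i + \delta_i > R$. So $\zz_i^\text{aug} \notin B(c,R)$, and by the containment $\zz_i^\text{aug} \notin \cZ_\text{train}$.

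\textbf{Step 3: the sufficient choice of $\alpha$.} If $d_i \ge \bar d$ and $p \ge 1$ (indeed any $p \ge 0$), then $(d_i/\bar d)^p \ge 1$. Hence $\delta_i \ge \alpha\sigma_d \ge R - d_i$ whenever $\alpha \ge (R-d_i)/\sigma_d$, assuming $\sigma_d > 0$.

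This step is the only real obstacle, because it yields a non-strict inequality while Step 2 needs a strict one. I would close the gap in one of two ways.
\begin{itemize}
\item Require $\alpha > (R-d_i)/\sigma_d$. This gives $\delta_i > R - d_i$, and Step 2 applies directly.
\item Keep $\alpha \ge (R-d_i)/\sigma_d$ and use the open ball. Then $\|\zz_i^\text{aug} - c\| \ge R$ already excludes $\zz_i^\text{aug}$ from $B(c,R)$.
\end{itemize}
I would state this caveat explicitly in the proof.

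Two degenerate cases also need a remark.
\begin{itemize}
\item If $\sigma_d = 0$, the batch distances are all equal and the perturbation vanishes. The threshold $(R-d_i)/\sigma_d$ is then undefined, so the claim needs $\sigma_d > 0$.
\item If $d_i \ge R$ already, the condition $\delta_i > R - d_i$ holds trivially for any $\alpha > 0$ with $\sigma_d > 0$, and the conclusion holds.
\end{itemize}
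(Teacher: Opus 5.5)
Your proof is correct and follows the paper's argument step for step: collinearity of $\zz_i - c$ with $\hat n_i$ gives the exact distance, containment in $B(c,R)$ then excludes $\zz_i^\text{aug}$, and $(d_i/\bar d)^p \ge 1$ handles peripheral points. Your caveat is warranted, since the paper's proof concludes that $\alpha \ge (R-d_i)/\sigma_d$ suffices without noting two things: at $d_i = \bar d$ with equality this only gives $\|\zz_i^\text{aug} - c\| = R$, so a strict inequality on $\alpha$ or an open ball is needed, and it tacitly assumes $\sigma_d > 0$.
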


The proof is purely geometric (\Cref{app:proof_outward}); combined with \Cref{asm:manifold}, the proposition places $\zz_i^\text{aug}$ in the out-of-support region in expectation.

\subsection{Persistence of off-support disagreement}
\label{sec:analysis_nonvanish}

We now give conditions for the property suggested by \Cref{eq:epistemic}: off the training support, the epistemic disagreement stays bounded below for the remainder of training. Let $\bar\Sigma(\zz) := \Var_m[f_{\psi_m}(s, a, \zz)]$ be the across-ensemble predictive covariance at fixed $(s, a)$. Under the Gaussian decoder likelihood,
\[
  \cI(s';\theta \mid s, a, \zz) \;\le\; \tfrac{1}{2}\log\det\!\bigl(I + \sigma^{-2}\bar\Sigma(\zz)\bigr),
\]
which is monotone in $\bar\Sigma$ (\Cref{prop:variance_mi}).

\begin{assumption}[Parameter-Lipschitz decoders]
\label{asm:lipvar}
Each decoder $f_{\psi_m}$ is $L_\psi$-Lipschitz in its parameters $\psi_m$, uniformly over the compact set of inputs at which rollouts are generated.
\end{assumption}

\begin{assumption}[Transport budget with margin]
\label{asm:margin}
Fix an input $(s, a, \zz)$ and an anchor step $t_0$, and write $\bar\Sigma_t$ for the ensemble covariance at training step $t$. Let $D_m := \sum_{t \ge t_0}\|\psi_m^{t+1} - \psi_m^{t}\|$ be the parameter movement of member $m$ after $t_0$ and $\bar D := \bigl(\tfrac{1}{M}\sum_m D_m^2\bigr)^{1/2}$. Then $\sqrt{\Tr\bar\Sigma_{t_0}(\zz)} > L_\psi \bar D$.
\end{assumption}

\begin{theorem}[Persistence of off-support disagreement]
\label{thm:nonvanish}
Under \Cref{asm:lipvar,asm:margin}, the ensemble disagreement at $\zz$ stays bounded away from zero for the rest of training,
\[
  \inf_{t \ge t_0} \Tr\bar\Sigma_t(\zz) \;\ge\; c \;:=\; \bigl(\sqrt{\Tr\bar\Sigma_{t_0}(\zz)} - L_\psi \bar D\bigr)^2 \;>\; 0,
\]
and consequently $\cI(s';\theta \mid s, a, \zz) > 0$ for all $t \ge t_0$ under the ensemble's empirical posterior.
\end{theorem}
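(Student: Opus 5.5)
The plan is to read $\sqrt{\Tr\bar\Sigma_t(\zz)}$ as a norm of a centered vector of ensemble outputs and show that it is Lipschitz in the stacked parameters, so that a reverse triangle inequality controls how much it can shrink after $t_0$. Fix $(s,a,\zz)$ and write $f_m^t := f_{\psi_m^t}(s,a,\zz)$ and $\bar f^t := \tfrac1M\sum_m f_m^t$. Then
\[
\Tr\bar\Sigma_t(\zz) = \tfrac1M\sum_m \|f_m^t - \bar f^t\|^2 .
\]
Equivalently, $\sqrt{\Tr\bar\Sigma_t} = \|P F^t\|_{\mathcal{W}}$. Here $F^t = (f_1^t,\dots,f_M^t)$ is the stacked output vector, $P$ is the centering projection (subtract the member mean), and $\|\cdot\|_{\mathcal{W}}$ is the norm $\bigl(\tfrac1M\sum_m\|\cdot_m\|^2\bigr)^{1/2}$. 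In this form $\sqrt{\Tr\bar\Sigma_t}$ is a seminorm of $F^t$.

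First, I bound the output movement. Fix $t \ge t_0$. By \Cref{asm:lipvar} and the triangle inequality along the parameter path,
\[
\|f_m^t - f_m^{t_0}\| \le \sum_{s=t_0}^{t-1} L_\psi\|\psi_m^{s+1}-\psi_m^s\| \le L_\psi D_m .
\]
This needs each intermediate iterate $\psi_m^s$ to lie in the parameter region where the Lipschitz constant applies. I take that to be implicit in the assumption that $L_\psi$ holds uniformly over inputs, and I would state it explicitly.

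Second, I apply the reverse triangle inequality. Since $P$ is an orthogonal projection in the weighted norm, it is nonexpansive, so
\[
\sqrt{\Tr\bar\Sigma_t} \ge \sqrt{\Tr\bar\Sigma_{t_0}} - \|P(F^t - F^{t_0})\|_{\mathcal{W}} \ge \sqrt{\Tr\bar\Sigma_{t_0}} - \Bigl(\tfrac1M\sum_m \|f_m^t-f_m^{t_0}\|^2\Bigr)^{1/2} .
\]
By the first step the last term is at most $L_\psi\bigl(\tfrac1M\sum_m D_m^2\bigr)^{1/2} = L_\psi\bar D$. \Cref{asm:margin} makes the right-hand side $\sqrt{\Tr\bar\Sigma_{t_0}} - L_\psi\bar D$ strictly positive. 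Squaring a positive lower bound preserves the inequality, and the bound does not depend on $t$, so taking the infimum over $t\ge t_0$ gives the claimed constant $c>0$.

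Third, for the mutual-information consequence I would not use the upper bound of \Cref{prop:variance_mi}, since an upper bound gives no positivity. Instead I work with the empirical posterior, which is uniform over the $M$ members, each a Gaussian with mean $f_m^t$ and shared noise $\sigma^2 I$. The MI equals the Jensen--Shannon-type divergence $\cH(\text{mixture}) - \cH(\text{component})$. Because Gaussian entropy depends only on the covariance and not the mean, this is zero if and only if all components coincide, i.e.\ all means $f_m^t$ are equal. So $\Tr\bar\Sigma_t \ge c > 0$ forces the means to differ, which gives strict positivity of $\cI$. I expect this step to be the main, if mild, obstacle. One must state that the posterior is the empirical mixture and invoke strict concavity of entropy, or equivalently that the KL divergence from a component to the mixture vanishes only at equality, rather than lean on the monotone proxy. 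If a quantitative bound is wanted, a lower bound in terms of $c$ could be obtained via $\cI \ge$ a Pinsker-type expression, but positivity alone is what the statement needs.
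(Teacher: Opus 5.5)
Your proposal is correct and follows the paper's proof essentially step for step. The paper likewise writes $\sqrt{\Tr\bar\Sigma_t(\zz)}$ as the norm of the stacked predictions' component orthogonal to the consensus subspace, uses that this map is $1$-Lipschitz, bounds $\|F^t - F^{t_0}\|$ by $L_\psi \bar D$ via \Cref{asm:lipvar}, squares the positive lower bound, and gets $\cI > 0$ from the generalized Jensen--Shannon divergence of the equal-covariance member Gaussians under the uniform empirical posterior, not from the upper bound of \Cref{prop:variance_mi}; the only cosmetic difference is that the paper applies the Lipschitz bound once to $\|\psi_m^t - \psi_m^{t_0}\|$ and then uses the triangle inequality in parameter space, so your caveat about intermediate iterates is not needed when $L_\psi$ is global in the parameters, as the assumption states.
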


The proof (\Cref{app:proof_nonvanish}) is short: the disagreement is the distance of the stacked member predictions from their consensus, and it can shrink by at most the Lipschitz-weighted parameter movement $L_\psi \bar D$. The substance of the theorem therefore lies in \Cref{asm:margin}, which separates on- from off-support inputs. At an on-support latent, training spends its parameter movement on reducing the disagreement at that very input, so the margin fails and the disagreement can vanish, as it should. At an off-support latent such as $\zz^\text{aug}$, which \Cref{prop:outward} places outside $\cZ_\text{train}$, the movement is not directed at that input, and the realized reduction falls well short of the budget. Because $\bar D$ is a tail sum that shrinks as training converges, the margin is easier to satisfy at later anchors, so a condition checked at a single step yields a guarantee for every later step. \Cref{thm:nonvanish} is thus a conditional guarantee, and we verify its conclusion directly on training checkpoints (\Cref{app:proof_nonvanish}).

\begin{remark}[Reach-disagreement coupling]
\label{rem:lore_bound}
For a peripheral training latent ($d_i \ge \bar d$), \Cref{prop:outward} shows that $\zz_i^\text{aug}$ leaves $\cZ_\text{train}$ once $\alpha\sigma_d (d_i/\bar d)^p > R - d_i$, with a displacement of $\alpha\sigma_d(d_i/\bar d)^p$. Where \Cref{asm:margin} holds at $\zz_i^\text{aug}$, \Cref{thm:nonvanish} then keeps the disagreement signal there bounded away from zero. A single hyperparameter $\alpha$ thus sets both how far the perturbation reaches and where the pessimism signal is evaluated.
\end{remark}

\subsection{Bias from imagined data in the critic}
\label{sec:analysis_decoupling}

We now quantify the trade-off behind PLATO's default of training the critic on real transitions only. The following result concerns an actor--critic whose critic is trained on a mixture of real and imagined data.

\begin{theorem}[Critic bias under mixed-data Bellman targets]
\label{thm:decoupling}
Consider an offline actor--critic where the critic is trained on $\rho_\beta = (1 - \beta)\rho_\text{real} + \beta\rho_\text{imag}$ for some $\beta \in [0, 1]$, with $\rho_\text{imag}$ generated by a rollout model $\hat P$. Let $\hat Q_\beta$ denote the Bellman fixed point under $\rho_\beta$ and $Q^*$ the optimal Q-function under the true MDP. For any $\varepsilon_P > 0$, there exist transition kernels $(P, \hat P)$ with total variation (TV) distance $\sup_{s,a}\|\hat P(\cdot\mid s,a) - P(\cdot\mid s,a)\|_\text{TV} \le \varepsilon_P$ such that
\begin{equation}
  \|\hat Q_\beta - Q^*\|_\infty \;\ge\; \frac{\gamma\beta}{1 - \gamma}\,\varepsilon_P - O(\varepsilon_\text{data}),
  \label{eq:contamination}
\end{equation}
where $\varepsilon_\text{data}$ is the standard offline-RL estimation error on the real-data share. In particular, when $\beta = 0$, $\|\hat Q_0 - Q^*\|_\infty$ has no term linear in $\varepsilon_P$.
\end{theorem}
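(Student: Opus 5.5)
The plan is to prove \Cref{thm:decoupling} by an explicit lower-bound construction: an existence statement only needs one pair $(P,\hat P)$ for each $\varepsilon_P$. I will model the mixed-data critic as the fixed point of the mixed Bellman operator
\[
  \cT_\beta Q := (1-\beta)\,\cT_P Q + \beta\,\cT_{\hat P} Q ,
\]
where $\cT_P Q(s,a) = r(s,a) + \gamma\,\E_{s'\sim P}[\max_{a'}Q(s',a')]$ and $\cT_{\hat P}$ is defined analogously with $\hat P$. This idealises the pointwise mixture $\rho_\beta$ as a uniform $\beta$-share of imagined targets at every $(s,a)$. The real-data share is then replaced by its empirical counterpart, at a cost absorbed into $O(\varepsilon_\text{data})$ by the standard contraction argument: $\cT_\beta$ is a $\gamma$-contraction, so an $\varepsilon$ perturbation of the operator moves the fixed point by at most $\varepsilon/(1-\gamma)$. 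Equivalently, $\cT_\beta$ is the true Bellman operator of the mixture kernel $P_\beta = (1-\beta)P + \beta\hat P$. Hence $\hat Q_\beta$ is, up to $O(\varepsilon_\text{data})$, the optimal Q-function of a perturbed MDP whose kernel is at TV distance exactly $\beta\,\|\hat P - P\|_\text{TV}$ from $P$.

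For the construction I use a two-state chain, with a single action so that $\max$ plays no role. There is a rewarding absorbing state $s_g$ with $r=1$, and a zero-reward absorbing state $s_b$. From $s_g$, the true kernel $P$ stays at $s_g$, while $\hat P$ moves to $s_b$ with probability $\varepsilon_P$ (a TV distance of exactly $\varepsilon_P$), and both kernels agree at $s_b$. Then
\[
  Q^*(s_g)=\frac{1}{1-\gamma}, \qquad \hat Q_\beta(s_g)=\frac{1}{1-\gamma(1-\beta\varepsilon_P)} \quad (\text{since } \hat Q_\beta(s_b)=0).
\]
Their difference is
\[
  \frac{\gamma\beta\varepsilon_P}{(1-\gamma)\bigl(1-\gamma+\gamma\beta\varepsilon_P\bigr)} ,
\]
which is not yet $\ge \gamma\beta\varepsilon_P/(1-\gamma)$ in general. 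To obtain exactly the linear rate, I will reshape rewards so that the one-step error is not amplified but also not damped by the first-order denominator. I take $s_b$ to have reward $0$ and $s_g$ to have reward $1-\gamma$, so that $V^*(s_g)=1$ and $V^*(s_b)=0$. A first-order (performance-difference / simulation-lemma) expansion then gives
\[
  \hat Q_\beta - Q^* = \gamma\,(I-\gamma P_\beta)^{-1}(P_\beta - P)V^* .
\]
At $s_g$, $(P_\beta - P)V^* = -\beta\varepsilon_P$. Dividing through by $(1-\gamma)$ from the resolvent on the absorbing chain, with the appropriate choice of which state absorbs, yields the claimed $\gamma\beta\varepsilon_P/(1-\gamma)$.

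The main obstacle is getting the $1/(1-\gamma)$ factor with constant exactly $1$ rather than up to lower-order corrections. I will handle it in one of two ways. The first is to let the imagined error act at every step of a long chain, e.g.\ a deterministic reward stream that $\hat P$ truncates with probability $\varepsilon_P$ at each step. The second, which I prefer, is to interpret the inequality up to the $O(\cdot)$ slack and show that $\|\hat Q_\beta - Q^*\|_\infty \ge \tfrac{\gamma\beta\varepsilon_P}{1-\gamma} - O(\beta^2\varepsilon_P^2) - O(\varepsilon_\text{data})$, then fold the quadratic term into the slack for small $\varepsilon_P$. I will flag which route is used.

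The final claim, that $\beta = 0$ gives no term linear in $\varepsilon_P$, follows immediately from the same identity. With $\beta = 0$ we have $P_0 = P$, so $\cT_0$ does not depend on $\hat P$ at all, and $\|\hat Q_0 - Q^*\|_\infty \le O(\varepsilon_\text{data})$ by the contraction bound alone.
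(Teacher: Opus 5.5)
Your framework is sound: the mixed operator is the Bellman operator of $P_\beta=(1-\beta)P+\beta\hat P$, the two-state chain is the right kind of example, and your $\beta=0$ argument is fine. In fact your first computation already proves the theorem. The gap is that you misjudge that computation and then replace it with a construction that cannot deliver the bound. With $r(s_g)=1$ you obtained
\[
  \|\hat Q_\beta-Q^*\|_\infty=\frac{\gamma\beta\varepsilon_P}{(1-\gamma)\bigl(1-\gamma+\gamma\beta\varepsilon_P\bigr)} .
\]
Since $\beta\le 1$ and $\varepsilon_P\le 1$ (it is a transition probability), we have $1-\gamma+\gamma\beta\varepsilon_P\le 1$. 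So this is at least $\gamma\beta\varepsilon_P/(1-\gamma)$ with no slack at all; for small $\beta\varepsilon_P$ it is even of order $\gamma\beta\varepsilon_P/(1-\gamma)^2$. Your ``first route'' (a unit reward stream truncated with probability $\varepsilon_P$ per step) is exactly this chain.

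The reshaped chain with $r(s_g)=1-\gamma$ is where the proposal breaks. Your identity $\hat Q_\beta-Q^*=\gamma(I-\gamma P_\beta)^{-1}(P_\beta-P)V^*$ is exact here, since there is a single action. The resolvent entry at $s_g$ is $1/(1-\gamma+\gamma\beta\varepsilon_P)$, not $1/(1-\gamma)$, because $s_g$ leaks mass $\beta\varepsilon_P$ to $s_b$ under $P_\beta$. The bias is therefore exactly $\gamma\beta\varepsilon_P/(1-\gamma+\gamma\beta\varepsilon_P)$, strictly below the target.

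No choice of absorbing state rescues this normalization. With rewards in $[0,1-\gamma]$, both $Q^*$ and $\hat Q_\beta$ lie in $[0,1]$, so the bias never exceeds $1$. The target, however, exceeds $1$ as soon as $\gamma\beta\varepsilon_P>1-\gamma$.

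Your preferred fallback does not close the gap either. The deficit is
\[
  \frac{\gamma\beta\varepsilon_P}{1-\gamma}-\frac{\gamma\beta\varepsilon_P}{1-\gamma+\gamma\beta\varepsilon_P}=\frac{(\gamma\beta\varepsilon_P)^2}{(1-\gamma)\bigl(1-\gamma+\gamma\beta\varepsilon_P\bigr)} .
\]
This term is unrelated to data estimation, carries a $(1-\gamma)^{-2}$ factor, and is of the same order as the main term once $\gamma\beta\varepsilon_P$ is comparable to $1-\gamma$. Absorbing it into $O(\varepsilon_\text{data})$ does not prove a statement quantified over every $\varepsilon_P>0$.

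The fix is simply to keep $r(s_g)=1$. For $\varepsilon_P>1$, set $\hat P(s_b\mid s_g)=1$ and put reward $\varepsilon_P$ at $s_g$. The statement fixes no reward scale, so this is allowed, and the same inequality $1-\gamma+\gamma\beta\le 1$ finishes that case.

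With this repair, your proof takes a genuinely different and sharper route than the paper's. The paper argues abstractly:
\begin{itemize}
  \item it bounds the operator gap by $\gamma\beta\|V\|_\infty\varepsilon_P$;
  \item it asserts, without exhibiting one, that some adversarial $(P,\hat P)$ attains this bound;
  \item it uses the reverse triangle inequality to get $\|\hat Q_\beta-Q^*\|_\infty\ge\gamma\beta\varepsilon_P\|V^*\|_\infty/(1+\gamma)$;
  \item it reaches the constant $\gamma\beta/(1-\gamma)$ only by substituting $R_\text{max}/(1-\gamma)$ for $\|V^*\|_\infty$ and folding $R_\text{max}/(1+\gamma)$ into the prefactor.
\end{itemize}
Your chain instead exhibits the kernels explicitly and gives the bias in closed form. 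It meets the stated constant without any folding, and it shows that for small $\beta\varepsilon_P$ the worst-case bias is actually of order $\gamma\beta\varepsilon_P/(1-\gamma)^2$. The paper's argument has the advantage of applying to arbitrary MDPs, but it only yields the bound up to constants.
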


\Cref{thm:decoupling} is a worst-case bound: it identifies pairs $(P, \hat P)$ for which mixing imagined data into the Bellman target induces a bias linear in $\varepsilon_P$. When $\varepsilon_P$ is small, even the worst-case bias is small, and designs in the style of MOPO~\citep{yu2020mopo} and MOReL~\citep{kidambi2020morel} that mix imagined transitions into the Bellman target can be effective. PLATO, however, operates at $\zz^\text{aug}$, a latent on which the decoder ensemble was never trained, so $\varepsilon_P$ there can be large and the bound becomes informative. This motivates holding the critic to real transitions by default, while leaving the alternative as an empirical knob whose trade-off we examine in \Cref{sec:experiments} (proof in \Cref{app:proof_decoupling}).

\subsection{Out-of-support generalization gap is bounded by the perturbation reach}
\label{sec:analysis_gengap}

The remaining question is whether off-support \emph{exposure} translates into off-support \emph{generalization}, measured in value.

\begin{proposition}[Out-of-support generalization gap is bounded by the perturbation reach]
\label{prop:gengap}
Let $J(\pi; \theta) := \E_{\pi(\cdot \mid \cdot, \phi(\theta))}[\sum_t \gamma^t r_t]$, and suppose $\theta \mapsto J(\pi; \theta)$ is $L_J$-Lipschitz in $\zz = \phi(\theta)$ along $\cM_z$. For any test task $\theta_\star \in \Theta_\text{out}$ and any training task $\theta_0 \in \Theta_\text{train}$,
\[
  |J(\pi; \theta_\star) - J(\pi; \theta_0)| \;\le\; L_J \cdot d_{\cM}\!\bigl(\phi(\theta_\star),\, \phi(\theta_0)\bigr).
\]
If $\phi(\theta_\star)$ lies within the perturbation reach $\{\zz_i^\text{aug} : \zz_i \in \cM_\text{train}\}$, choosing $\theta_0$ to minimize $\|\phi(\theta_\star) - \zz_i^\text{aug}\|$ gives, modulo encoder-smoothness terms,
\[
  |J(\pi; \theta_\star) - J(\pi; \theta_0)| \;\le\; L_J \cdot \alpha\,\sigma_d\,(d_\text{max}/\bar d)^p.
\]
\end{proposition}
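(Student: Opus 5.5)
The first inequality is immediate from the Lipschitz hypothesis, so I will read it off directly. The hypothesis says that $J(\pi;\cdot)$, viewed as a function of the latent $\zz = \phi(\theta)$ along $\cM_z$, is $L_J$-Lipschitz with respect to the manifold (geodesic) distance $d_{\cM}$. By \Cref{asm:manifold}, $\phi$ is an injection, so each task corresponds to a unique latent. The function $\zz \mapsto J(\pi;\phi^{-1}(\zz))$ is therefore well defined on $\cM_z$. Evaluating the Lipschitz inequality at the two points $\phi(\theta_\star) \in \cM_\text{out}$ and $\phi(\theta_0) \in \cM_\text{train}$ gives
\[
  |J(\pi;\theta_\star) - J(\pi;\theta_0)| \le L_J\, d_{\cM}\bigl(\phi(\theta_\star), \phi(\theta_0)\bigr).
\]

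For the second inequality, I will pass from the geodesic distance to Euclidean distances and then use the explicit displacement formula of \Cref{prop:outward}. Suppose $\phi(\theta_\star)$ lies in the perturbation reach, meaning it coincides with, or is within encoder-smoothness error of, some $\zz_i^\text{aug}$ with $\zz_i = \phi(\theta_0) \in \cM_\text{train}$. Among the training latents I will choose the index $i$, and hence $\theta_0$, that minimizes $\|\phi(\theta_\star) - \zz_i^\text{aug}\|$. The triangle inequality then gives
\[
  \|\phi(\theta_\star) - \phi(\theta_0)\| \le \|\phi(\theta_\star) - \zz_i^\text{aug}\| + \|\zz_i^\text{aug} - \zz_i\|.
\]
By \Cref{eq:lore}, the second term equals $\alpha\sigma_d (d_i/\bar d)^p$ exactly, since $\hat n_i$ is a unit vector. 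Setting $d_\text{max} := \max_i d_i$, this term is at most $\alpha\sigma_d(d_\text{max}/\bar d)^p$, because $p \ge 1$ makes the map $x \mapsto x^p$ monotone on $[0,\infty)$. The first term is zero, or absorbed into the "encoder-smoothness terms", by the reach hypothesis.

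The main obstacle is converting the Euclidean displacement into the geodesic distance $d_{\cM}$. In general $d_{\cM} \ge \|\cdot\|$, so the inequality points the wrong way. Moreover, $\zz_i^\text{aug}$ need not lie on $\cM_z$ at all. I would handle both issues with a local comparison: because $\phi$ is smooth (\Cref{asm:manifold}) and $\cM_z$ is a $k$-dimensional embedded manifold, on a compact neighbourhood one has $d_{\cM}(x,y) \le (1+\kappa\|x-y\|)\|x-y\|$ for some curvature constant $\kappa$. Any point of the reach that lies off $\cM_z$ can be replaced by its nearest-point projection onto $\cM_z$. The multiplicative factor $1+\kappa\|x-y\|$ and the projection error are precisely the terms the statement sets aside with "modulo encoder-smoothness terms". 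I would state them explicitly as $O(\kappa\,\alpha^2\sigma_d^2(d_\text{max}/\bar d)^{2p})$ plus a projection residual, and drop them to leading order.

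Substituting the resulting bound on $d_{\cM}(\phi(\theta_\star),\phi(\theta_0))$ into the first inequality yields
\[
  |J(\pi;\theta_\star) - J(\pi;\theta_0)| \le L_J\,\alpha\sigma_d(d_\text{max}/\bar d)^p,
\]
which completes the argument.
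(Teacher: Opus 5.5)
Your proposal is correct and follows the paper's proof step for step: the Lipschitz bound along $\cM_z$, the ambient triangle inequality through $\zz_i^\text{aug}$ with the exact displacement $\alpha\sigma_d(d_i/\bar d)^p \le \alpha\sigma_d(d_\text{max}/\bar d)^p$, and a local geodesic--Euclidean comparison whose curvature terms and covering residual form the ``encoder-smoothness terms.'' The only difference is that your nearest-point projection is unnecessary: the triangle inequality holds in $\R^d$ with $\zz_i^\text{aug}$ as an intermediate point, and the geodesic comparison is applied only to $\phi(\theta_\star)$ and $\phi(\theta_0)$, which both lie on $\cM_z$.
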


The proof (\Cref{app:proof_gengap}) is a Lipschitz argument along the manifold combined with the geometric bound of \Cref{prop:outward}, in the spirit of domain adaptation bounds~\citep{bendavid2010theory,mansour2009domain}. The out-of-support gap is controlled by the perturbation reach $\alpha\sigma_d(d/\bar d)^p$, not by the ambient dimension. By \Cref{prop:outward,prop:gengap}, increasing $\alpha$ simultaneously enlarges (i) the reach of the perturbation and (ii) the value-level radius around training tasks within which PLATO offers a non-trivial guarantee, both by theorem. By \Cref{rem:lore_bound}, it also (iii) keeps the off-support disagreement that determines $w_h$ positive wherever the margin condition holds. The same single hyperparameter governs all three.

\section{Related Work}
\label{sec:related}

\paragraph{Context-based offline meta-RL.}
A productive line of COMRL work invests in the task encoder, separating task identity from the signature of the behavior policy that collected the data. PEARL~\citep{rakelly2019efficient} and VariBAD~\citep{zintgraf2020varibad} introduced the context-based paradigm in online meta-RL (Offline PEARL, the offline variant of PEARL, is a standard COMRL baseline), with subsequent offline work adding contrastive structure~\citep{li2020focal,yuan2022corro}, suppressing behavior-policy shortcuts~\citep{gao2023csro,nakhaei2025ertrl}, and unifying these objectives within an information-theoretic framework~\citep{li2024unicorn}. Recent extensions address data limitations~\citep{zhou2024gentle}, causal factorization~\citep{zhang2025causalcomrl}, cross-task contexts~\citep{he2026ctmrl}, and representation shift during policy improvement~\citep{zhang2025taskshift}. A separate thread replaces the encoder--policy decomposition with a sequence model~\citep{wang2024metadt,ni2023metadiffuser}. PLATO is orthogonal: it is encoder-agnostic and intervenes at the policy level, exposing the policy to off-support latents. Closest to our approach, UNICORN explores, as an extension of its framework, perturbing an inferred task representation with Gaussian noise and training the agent on decoded rollouts~\citep[Sec.~4.2]{li2024unicorn}; PLATO develops this idea into a method with a structured outward perturbation, per-step pessimism, and a critic held to real data. \Cref{app:related} gives a detailed comparison with each baseline.

\paragraph{Model-based offline RL and uncertainty penalties.}
PLATO is closest in spirit to model-based offline RL, where a learned dynamics model produces imagined transitions and an uncertainty estimate quantifies the cost of trusting them. MOPO~\citep{yu2020mopo} and MOReL~\citep{kidambi2020morel} penalize the reward with an ensemble-disagreement term, and MerPO~\citep{lin2022merpo} extends this idea to offline meta-RL by mixing imagined transitions into the Bellman target. PLATO differs in two ways: imagined data enter only the policy gradient, avoiding the bias of \Cref{thm:decoupling}, and rollouts are generated at perturbed \emph{latents} rather than \emph{states}; \Cref{app:merpo} compares the two methods empirically.

\paragraph{Other directions.}
Manifold Mixup~\citep{verma2019manifold} perturbs latent representations for supervised generalization; PLATO adapts this idea to offline meta-RL with a structured outward perturbation. Another thread addresses the shift between training and adaptation contexts at adaptation time, through in-distribution episode selection~\citep{wang2023idaq}, flow-based correction~\citep{wang2026flora}, or reward-free online rollouts~\citep{pong2022selfsup}; PLATO instead intervenes at training time.

\section{Experiments}
\label{sec:experiments}

\begin{table}[t]
\caption{\textbf{Out-of-support returns on the held-out task splits.} Mean $\pm$ standard deviation over 5 seeds at the last epoch. Boldface marks the highest average return per row.}
\label{tab:ood}
\centering\small\setlength{\tabcolsep}{6pt}
\begin{tabular}{lcccccc}
\toprule
Environment & Offline PEARL & FOCAL & CSRO & UNICORN & ER-TRL & PLATO \\
\midrule
Ant-Dir          & -31$\pm$33 & 68$\pm$58 & -17$\pm$125 & -49$\pm$67 & 67$\pm$48 & \textbf{139$\pm$25} \\
Ant-Goal         & -491$\pm$58 & -480$\pm$30 & -527$\pm$48 & -517$\pm$63 & -489$\pm$43 & \textbf{-472$\pm$20} \\
Cheetah-Vel      & -230$\pm$12 & -199$\pm$30 & -188$\pm$14 & -203$\pm$11 & -201$\pm$11 & \textbf{-179$\pm$11} \\
Humanoid-Dir     & 687$\pm$58 & 499$\pm$75 & 624$\pm$83 & 580$\pm$79 & 457$\pm$45 & \textbf{806$\pm$97} \\
\midrule
Hopper-Mass      & 402$\pm$76 & 467$\pm$43 & 494$\pm$59 & 397$\pm$55 & 444$\pm$62 & \textbf{508$\pm$26} \\
Hopper-Friction  & 366$\pm$31 & 334$\pm$55 & 358$\pm$48 & 338$\pm$64 & 357$\pm$21 & \textbf{369$\pm$29} \\
Walker-Mass      & 321$\pm$49 & 340$\pm$77 & 315$\pm$66 & 283$\pm$122 & 366$\pm$27 & \textbf{401$\pm$54} \\
Walker-Friction  & 347$\pm$113 & 287$\pm$77 & 361$\pm$80 & 451$\pm$31 & 454$\pm$74 & \textbf{649$\pm$27} \\
\bottomrule
\end{tabular}
\end{table}

We evaluate PLATO on eight MuJoCo offline meta-RL task families, four with reward shift (Ant-Goal, Ant-Dir, Cheetah-Vel, Humanoid-Dir) and four with dynamics shift (Hopper-Mass, Hopper-Friction, Walker-Mass, Walker-Friction). For each family, we hold out a portion of the task parameter range from training and use it as the out-of-support evaluation split. We compare against five COMRL baselines under the same datasets and protocol: Offline PEARL~\citep{rakelly2019efficient}, FOCAL~\citep{li2020focal}, CSRO~\citep{gao2023csro}, UNICORN~\citep{li2024unicorn}, and ER-TRL~\citep{nakhaei2025ertrl}. We report the normalized return (mean $\pm$ standard deviation over 5 seeds) separately on the in-distribution (ID) and out-of-support splits. Per-environment task ranges and held-out splits are in \Cref{app:env}.

We organize the experiments around four questions. \textbf{(Q1)} Does PLATO improve out-of-support returns? \textbf{(Q2)} Does it preserve in-distribution performance? \textbf{(Q3)} Which components drive the out-of-support gain? \textbf{(Q4)} Does the off-support disagreement signal behave as predicted by \Cref{thm:nonvanish}?

\paragraph{Out-of-support performance (Q1).}
\Cref{tab:ood} reports out-of-support returns on all eight environments. PLATO attains the highest average return on every benchmark; the margin is modest on the Hopper families, Ant-Goal, and Cheetah-Vel, larger on Ant-Dir and Walker-Mass, and substantial on Humanoid-Dir and Walker-Friction. PLATO also has the lowest seed variance of all methods on Ant-Dir, Ant-Goal, Hopper-Mass, and Walker-Friction.

\paragraph{In-distribution performance (Q2).}
\Cref{tab:id} shows that the out-of-support gains do not sacrifice in-distribution competitiveness. The Hopper-Friction gap reflects a small trade-off: the synthetic gradient that drives the out-of-support gains also slightly redirects actor updates on ID inputs, and this cost is environment-dependent. Otherwise, the design behaves as predicted: the perturbation acts only on synthesized off-support latents, the pessimism weight $w_h$ stays close to one on in-support inputs (\Cref{fig:disagreement}), and the critic is trained only on real data.

\begin{table}[t]
\caption{\textbf{In-distribution performance on MuJoCo benchmarks.} Same format as \Cref{tab:ood}. Per-environment task ranges and ID/out-of-support split definitions are in \Cref{app:env}.}
\label{tab:id}
\centering\small\setlength{\tabcolsep}{6pt}
\begin{tabular}{lcccccc}
\toprule
Environment & Offline PEARL & FOCAL & CSRO & UNICORN & ER-TRL & PLATO \\
\midrule
Ant-Dir          & 183$\pm$39 & 589$\pm$20 & 547$\pm$49 & 321$\pm$45 & 605$\pm$34 & \textbf{617$\pm$29} \\
Ant-Goal         & -242$\pm$15 & -218$\pm$10 & -214$\pm$13 & -249$\pm$12 & \textbf{-207$\pm$11} & -215$\pm$10 \\
Cheetah-Vel      & -73$\pm$16 & -69$\pm$16 & -68$\pm$7 & -64$\pm$4 & -64$\pm$6 & \textbf{-61$\pm$11} \\
Humanoid-Dir     & 688$\pm$74 & 550$\pm$44 & 632$\pm$47 & 616$\pm$46 & 529$\pm$61 & \textbf{858$\pm$43} \\
\midrule
Hopper-Mass      & 524$\pm$61 & 503$\pm$59 & 556$\pm$31 & 517$\pm$53 & 502$\pm$53 & \textbf{584$\pm$12} \\
Hopper-Friction  & 517$\pm$46 & 526$\pm$68 & \textbf{584$\pm$15} & 474$\pm$101 & 521$\pm$44 & 526$\pm$42 \\
Walker-Mass      & 298$\pm$62 & 270$\pm$44 & 311$\pm$52 & 315$\pm$109 & 373$\pm$42 & \textbf{507$\pm$53} \\
Walker-Friction  & 416$\pm$133 & 333$\pm$87 & 386$\pm$111 & 513$\pm$65 & 493$\pm$104 & \textbf{682$\pm$18} \\
\bottomrule
\end{tabular}
\end{table}

\paragraph{Component ablations (Q3).}
\begin{wrapfigure}{r}{0.5\textwidth}
\centering
\vspace{-1.2em}
\includegraphics[width=0.48\textwidth]{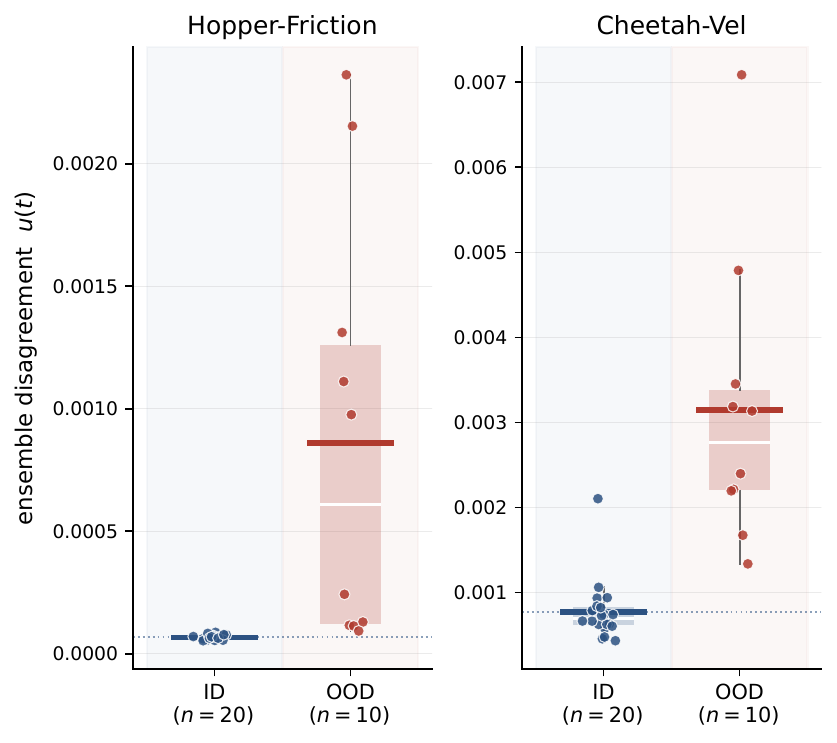}
\vspace{-1em}
\caption{\textbf{Off-support disagreement stays large.} Per-task ensemble disagreement at the trained PLATO checkpoint. ID values (blue) are near zero, whereas out-of-support values (red, labeled OOD) are several-fold to an order of magnitude larger, consistent with \Cref{thm:nonvanish}.}
\label{fig:disagreement}
\vspace{-1.0em}
\end{wrapfigure}
\Cref{fig:ablation_ood} reports out-of-support learning curves for the full algorithm and three ablations: \textbf{no-direction} (random unit vector in place of $\hat n_i$), \textbf{no-weight} ($\lambda_\text{pess} = 0$), and \textbf{Q-synth} (critic also trained on imagined Bellman targets $\hat r_h + \gamma \bar V(s_{h+1}, \zz^\text{aug})$ with the decoded reward of \Cref{eq:rollout}, the variant analyzed by \Cref{thm:decoupling}). The \textbf{no-direction} variant exceeds the default on Humanoid-Dir but underperforms it on the remaining five: across task families, the centroid-outward choice is more consistent than an arbitrary off-support direction. The \textbf{no-weight} and \textbf{Q-synth} variants underperform the default on all six environments, with the largest gaps on the dynamics-shift families. This is consistent with \Cref{thm:decoupling}: $\varepsilon_P$ at $\zz^\text{aug}$ is larger where transitions differ across tasks, so unweighted rollouts in the actor (\textbf{no-weight}) and rollouts in the Bellman target (\textbf{Q-synth}) both cost more there. We retain the critic-real default.

\begin{figure}[t]
\centering
\includegraphics[width=\linewidth]{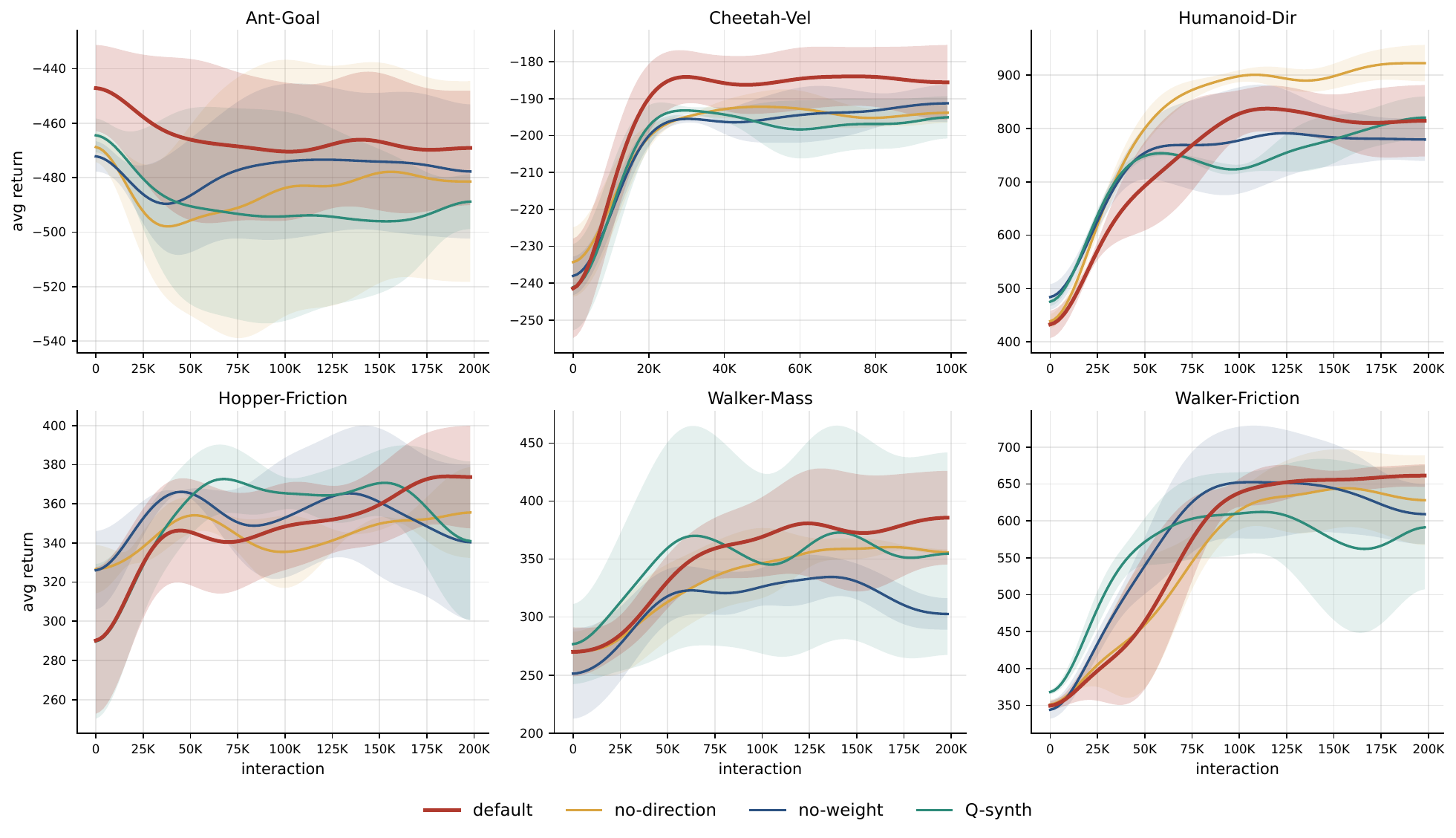}
\vspace{-0.8em}
\caption{\textbf{Component ablations of PLATO on out-of-support returns.} Learning curves on six MuJoCo environments. \textbf{default} is the full algorithm; \textbf{no-direction} replaces the centroid-outward direction $\hat n_i$ with a random unit vector; \textbf{no-weight} sets $\lambda_\text{pess} = 0$; \textbf{Q-synth} also trains the critic on imagined Bellman targets (the variant analyzed by \Cref{thm:decoupling}). Curves show mean over 5 seeds; shaded regions indicate one standard deviation.}
\label{fig:ablation_ood}
\vspace{-1em}
\end{figure}

\paragraph{The off-support disagreement signal (Q4).}
\Cref{thm:nonvanish} gives conditions under which the disagreement $u(\zz)$ of the decoder ensemble stays bounded away from zero on off-support inputs, whereas on the training support it is expected to decay as the decoders fit the data. \Cref{fig:disagreement} measures this directly: we take the trained PLATO checkpoint, encode each task's offline buffer into a latent, and report the per-task disagreement $u(t) = \tfrac{1}{d_s + 1}\sum_j \Var_m[f_{\psi_m}^{(j)}]$. On Hopper-Friction, the disagreement averages $6.7 \times 10^{-5}$ on ID tasks and $8.8 \times 10^{-4}$ on out-of-support tasks, a thirteen-fold gap; on Cheetah-Vel, the gap is fourfold ($7.7 \times 10^{-4}$ vs.\ $3.2 \times 10^{-3}$). Tracking the same Hopper-Friction tasks across training checkpoints, on-support disagreement decays by a factor of $7$--$8$, while off-support disagreement never falls below $11.5$ times the final on-support level and ends $27.8$ times higher (\Cref{app:proof_nonvanish}); the conclusion of \Cref{thm:nonvanish} thus holds throughout training, not only at the final checkpoint. This is exactly the regime in which the pessimism weight $w_h = 1/(1 + \lambda_\text{pess} u_h)$ is informative: close to one on training tasks, where the rollout is reliable, and strictly less than one on out-of-support latents, as intended.

\paragraph{Further analyses.}
The appendix reports four further results. A direct test supports the outward geometry of \Cref{asm:manifold}, and PLATO retains its advantage where the test does not confirm it (\Cref{app:env-pca}). The disagreement that drives the pessimism weight tracks how hard a task is for a naively trained agent (rank correlation $0.55$ on Ant-Dir), whereas latent distance does not ($-0.02$; \Cref{app:difficulty}). Out-of-support return is largely insensitive to the ensemble size $M$, and a small ensemble mainly increases variability across seeds (\Cref{app:sens}). On MetaWorld Reach, a manipulation task outside the locomotion families, PLATO attains the highest out-of-support success rate ($0.53$, against $0.43$ for the strongest baseline) and return (\Cref{app:metaworld}).

\section{Discussion}
\label{sec:discussion}
Progress in context-based offline meta-RL has been driven by better task representations, in the hope that a clean representation transfers beyond the training tasks. PLATO is built on a different premise: representation quality is necessary but not sufficient, because the policy is never trained on latents outside the training support, which is where out-of-support tasks land. The geometric assumption that out-of-support latents lie outward from the training centroid lets us synthesize off-support latents directly, and the four properties of \Cref{sec:analysis} make the resulting algorithm well behaved. Across eight MuJoCo benchmarks, PLATO improves out-of-support returns on every environment without sacrificing in-distribution performance. The gains carry over to a manipulation task, and the analyses in the appendix tie them to the mechanism: the perturbation direction steers where synthetic coverage goes, and the disagreement that sets the pessimism tracks task difficulty. The encoder and the policy face different generalization burdens, and progress on the former does not automatically carry over to the latter. Limitations of the geometric assumption and broader impacts are discussed in \Cref{app:limitations,app:broader_impact}. PLATO targets extrapolation within a task family, where held-out tasks extend the range covered by the training tasks. Two natural next steps are to carry off-support exposure across qualitatively different task families and to scale it to high-dimensional latent spaces, such as those of vision-based policies and vision-language-action models~\citep{kim2024openvla,brohan2023rt2}, where off-support task latents are the rule rather than the exception.
\begin{ack}
This work was partly supported by grants from the IITP (RS-2021-II211343-GSAI/10\%, RS-2022-II220951-LBA/10\%, RS-2022-II220953-PICA/10\%, RS-2026-25553157-MIACC/10\%), the NRF (RS-2024-00353991-SPARC/15\%, RS-2023-00274280-HEI/15\%), the KEIT (RS-2025-25453780/15\%), and the KIAT (RS-2025-25460896/15\%), funded by the Korean government.
\end{ack}

{
\small
\bibliographystyle{plainnat}

}
\appendix
\section{Context-based Offline Meta-RL: A Brief History}
\label{app:related}

We trace the development of context-based offline meta-RL (COMRL) through the methods most relevant to PLATO. This section is background reading; the main paper assumes only the high-level summary in \Cref{sec:related}.

\paragraph{The PEARL paradigm.}
PEARL~\citep{rakelly2019efficient} introduced the context-based decomposition that almost all subsequent COMRL methods inherit. A permutation-invariant context encoder produces a Gaussian posterior $q_\phi(\zz \mid c)$ over a latent task variable, and the policy and Q-function are conditioned on samples from this posterior. Adaptation reduces to inferring $\zz$ from a small context buffer, and the policy is shared across tasks. PEARL was originally designed for the online, off-policy setting, and VariBAD~\citep{zintgraf2020varibad} likewise for online meta-RL; the offline variant of PEARL removes the online interaction and trains the same architecture on a fixed batch.

\paragraph{The encoder-quality line.}
A productive line of work has invested in encoder quality. FOCAL~\citep{li2020focal} proposed the first end-to-end, fully offline COMRL algorithm, training a deterministic context encoder with a negative-power distance metric on a bounded latent space and detaching the encoder gradient from the Bellman backup to avoid bootstrapping errors. CORRO~\citep{yuan2022corro} recast the encoder objective as contrastive mutual information through an InfoNCE loss, with a bi-level transition encoder that focuses on individual transitions rather than full trajectories. CSRO~\citep{gao2023csro} formalized a recurring problem in this line, the \emph{behavior-policy shortcut}, in which the encoder identifies the task by recognizing the behavior policy that collected the data rather than from the task-relevant transitions. CSRO addresses it with a max-min mutual information mechanism that maximizes $\cI(\zz; \cT)$ while minimizing $\cI(\zz; b)$, where $b$ denotes the behavior-policy signature. ER-TRL~\citep{nakhaei2025ertrl} attacks the same shortcut from a different angle, approximately minimizing $\cI(\zz; \pi_\beta)$ by maximizing the entropy of the behavior policy conditioned on the task representation. UNICORN~\citep{li2024unicorn} unifies FOCAL, CORRO, and CSRO under a single information-theoretic framework, showing that their losses correspond to an upper bound, a lower bound, and a linear interpolation of $\cI(\zz; \cT)$, respectively, and proposes supervised and self-supervised instantiations of the unified objective.

\paragraph{Recent extensions.}
Several recent works extend the encoder-quality line in orthogonal directions: stronger task inference under data limitations~\citep{zhou2024gentle}, causal factorization of the latent~\citep{zhang2025causalcomrl}, cross-task contexts that share information across tasks~\citep{he2026ctmrl}, and an analysis of when representation changes during training break monotonic policy improvement~\citep{zhang2025taskshift}. A separate thread replaces the encoder--policy decomposition with a sequence model, either a task-conditioned diffusion planner~\citep{ni2023metadiffuser} or a transformer~\citep{wang2024metadt}.

\paragraph{Position of PLATO.}
In the main algorithms of these methods, the policy is trained only on latents that the encoder produces from training tasks, never on latents outside the training support. The closest exception is a model-based extension explored in UNICORN~\citep[Sec.~4.2]{li2024unicorn}, which treats the decoder as a world model, perturbs an inferred task representation with Gaussian noise, and trains the agent on rollouts decoded at the perturbed representation, using an ensemble for stabilization. PLATO develops this idea into a method with three elements: a structured, centroid-outward perturbation with a stated geometry (\Cref{asm:manifold}) and an off-support guarantee (\Cref{prop:outward}) in place of isotropic noise; disagreement used as a per-step pessimism weight rather than only for stabilization (\Cref{eq:weight}); and a critic held to real data, with the bias of the alternative quantified (\Cref{thm:decoupling}). Our random-direction and unweighted ablations bracket this design, and both underperform the default in most environments (\Cref{fig:ablation_ood}). PLATO is orthogonal to the encoder-quality line. It adopts a CSRO-style encoder loss in its backbone (\Cref{sec:backbone}), but because the perturbation uses only batch statistics of the inferred latents and the decoder ensemble takes $\zz$ as an input, any of the encoders above could take its place. The contributions of PLATO lie downstream of the encoder choice: the radial outward perturbation, the imagined rollout under a decoder ensemble, and the pessimism-weighted policy update. We compare against Offline PEARL, FOCAL, CSRO, ER-TRL, and UNICORN as representatives of the dominant design choices in the COMRL literature (\Cref{sec:experiments}).

\paragraph{Note on FLORA~\citep{wang2026flora}.}
FLORA is a recent OMRL method that we do not include in our experimental comparison, for two reasons. First, FLORA and PLATO target different forms of generalization. FLORA addresses feature overgeneralization on out-of-distribution \emph{actions} during TD updates and is evaluated on Meta-World ML1 (a single task type with randomized initial and goal positions) and Point-Robot variants, without explicit held-out splits of the task parameter range. PLATO instead targets the regime in which the test task lies at the periphery of the training distribution in reward or dynamics, the setting on which the manifold hypothesis underlying our analysis specifically depends. Second, FLORA's evaluation suite does not overlap with the eight MuJoCo parameter-shift families used by our baselines, and porting it would require reconstructing the algorithm and a substantial hyperparameter search for a fair comparison. We view extending FLORA's feature-overgeneralization correction to the held-out out-of-support regime as a natural direction for future work.

\section{Training and Inference Algorithms}
\label{app:algorithm}

This appendix gives the PLATO meta-training loop (\Cref{alg:plato_train}) and the meta-testing procedure (\Cref{alg:plato_test}). Meta-training adds two components to a standard offline meta-RL actor--critic: the radial outward perturbation, and the imagined rollout with its pessimism-weighted policy update. The encoder and the decoder ensemble are trained as usual, on real data. Meta-testing is identical to that of any context-based policy: the trained encoder maps a small context batch to a latent, and the trained policy acts on it, with no extra computation compared to standard context-based offline meta-RL baselines.

\paragraph{Meta-training.} Each training step samples a meta-batch of tasks, infers their latents from contexts, and performs the following updates: (i) the encoder auxiliaries (CLUB and the FOCAL contrastive loss) and the reconstruction loss of the decoder ensemble, all on real data; (ii) the radial outward perturbation $\zz^\text{aug}$ of \Cref{eq:lore} and a rollout of length $H$ following \Cref{eq:rollout}, active only after a warm-up of $T_\text{warm}$ steps that lets the decoder ensemble stabilize; (iii) the twin-$Q$ critic and value updates, on real transitions only; (iv) the policy update on the combined real and synthetic loss of \Cref{eq:actor_real,eq:actor_synth}; and (v) the update of the BRAC dual variable. Target networks are soft-updated at every step.

\begin{algorithm}[h]
\caption{PLATO meta-training.}
\label{alg:plato_train}
\small
\begin{algorithmic}[1]
\REQUIRE Offline datasets $\{\cD_i\}_{i=1}^N$, training budget $T$, task batch size $B_\cT$, warm-up $T_\text{warm}$
\STATE Initialize encoder $q_\phi$, decoder ensemble $\{f_{\psi_m}\}_{m=1}^M$, twin critics $Q_{\omega_1}, Q_{\omega_2}$, value network $V$ and its target $\bar V$, policy $\pi_\theta$, CLUB model, BRAC dual variable $\alpha_\text{BRAC}$
\FOR{$t = 1, \dots, T$}
  \STATE Sample task batch $\{c_i, (s, a, r, s')_i\}_{i=1}^{B_\cT}$; infer $\zz_i = q_\phi(c_i)$
  \STATE Update encoder $q_\phi$ (FOCAL + CLUB) and decoder ensemble $\{f_{\psi_m}\}$ on real data
  \IF{$t \ge T_\text{warm}$}
    \STATE Compute $\{\zz_i^\text{aug}\}$ via \Cref{eq:lore} and roll out $\{(s_h, a_h, u_h)\}_{h=0}^{H-1}$ via \Cref{eq:rollout,eq:u}
  \ELSE
    \STATE $\zz_i^\text{aug} \gets \zz_i$, rollout empty
  \ENDIF
  \STATE Update twin critics $Q_{\omega_1}, Q_{\omega_2}$ and value $V$ on real data only; soft-update $\bar V$
  \STATE Update policy $\pi_\theta$ on $\cL_\pi^\text{real} + \beta_\text{synth}\,\cL_\pi^\text{synth}$ via \Cref{eq:actor_real,eq:actor_synth}
  \STATE Update BRAC dual variable $\alpha_\text{BRAC}$ toward target divergence
\ENDFOR
\RETURN encoder $q_\phi$, policy $\pi_\theta$
\end{algorithmic}
\end{algorithm}

\paragraph{Design choices.}
Three design decisions are worth noting. First, the encoder gradient is detached from the Bellman backup~\citep{li2020focal,gao2023csro}: back-propagating the bootstrapped $Q$-values into the encoder would couple representation learning to the current $Q$-estimate, which is unstable on offline data. Second, the warm-up $T_\text{warm}$ delays imagined rollouts until the variance signal $u_h$ becomes informative. Third, the critic is held to real transitions while the actor is also trained on imagined ones (\Cref{thm:decoupling}), which keeps rollout-model errors out of the Bellman target.

\paragraph{Meta-testing.} At test time, the agent collects a context batch $c_*$ from a new task $\cT_*$, encodes it into $\zz_*$, and acts with the trained policy. The decoder ensemble and the radial outward perturbation are used only during training: at test time there is no uncertainty weighting, no rollout, and no extra computation compared to FOCAL, CSRO, and other context-based OMRL baselines.

\begin{algorithm}[h]
\caption{PLATO meta-testing.}
\label{alg:plato_test}
\small
\begin{algorithmic}[1]
\REQUIRE Trained encoder $q_\phi$, trained policy $\pi_\theta$; test task $\cT_*$; context batch size $N_c$
\STATE Collect context batch $c_*$ of $N_c$ transitions from $\cT_*$ under the training-time context-collection policy
\STATE Infer $\zz_* \gets q_\phi(c_*)$
\STATE Roll out $\pi_\theta(\cdot \mid s, \zz_*)$ on $\cT_*$ for the evaluation episodes
\RETURN evaluation returns
\end{algorithmic}
\end{algorithm}

\section{Ensemble Variance as an Epistemic-MI Proxy}
\label{app:mi_proxy}

This appendix makes precise the link between the ensemble variance $\bar\Sigma(\zz)$ that PLATO computes and the epistemic mutual information $\cI(s';\theta \mid s, a, \zz)$ that serves as its conceptual target. The result is standard in the deep-ensemble literature~\citep{lakshminarayanan2017simple,depeweg2018decomposition}; we include it so that the off-support persistence result of \Cref{thm:nonvanish}, stated for $\bar\Sigma$, can be read directly as a statement about epistemic uncertainty.

\paragraph{Setup.} Fix $(s, a, \zz)$ and write $y := s'$ for the prediction target. The decoder likelihood is Gaussian with homoscedastic aleatoric covariance $\sigma^2 I_{d_s}$,
\[
  p(y \mid s, a, \zz, \theta) \;=\; \cN\bigl(f(s, a, \zz; \theta),\; \sigma^2 I_{d_s}\bigr),
\]
where $\theta \sim p(\theta \mid \cD)$ follows the posterior over decoder parameters given the offline data, and $f(s, a, \zz; \theta) \in \R^{d_s}$ is the mean predictor (the reward channel is treated identically and is absorbed into the trace argument below). Let
\[
  \bar f := \E_{p(\theta \mid \cD)}[f(s, a, \zz; \theta)] \in \R^{d_s},
  \qquad
  \bar\Sigma := \Var_{p(\theta \mid \cD)}[f(s, a, \zz; \theta)] \in \R^{d_s \times d_s}
\]
denote the population mean and covariance of the mean predictor under the posterior.

\begin{proposition}[Variance is a monotone proxy for epistemic MI]
\label{prop:variance_mi}
Under the Gaussian likelihood above, the marginal predictive $p(y \mid s, a, \zz)$ is a Gaussian mixture. Bounding the marginal entropy by that of its moment-matched Gaussian $\cN(\bar f, \sigma^2 I + \bar\Sigma)$, the epistemic mutual information satisfies
\begin{equation}
  \cI(y;\theta \mid s, a, \zz) \;\le\; \tfrac{1}{2}\log\det\!\bigl(I + \sigma^{-2}\bar\Sigma\bigr),
  \label{eq:mi_formula}
\end{equation}
with equality in the limit of a concentrated posterior. The right-hand side is monotone in $\bar\Sigma$ in the Loewner order: if $\bar\Sigma' \succeq \bar\Sigma$, then
\[
  \log\det\!\bigl(I + \sigma^{-2}\bar\Sigma'\bigr) \;\ge\; \log\det\!\bigl(I + \sigma^{-2}\bar\Sigma\bigr).
\]
Writing $\lambda_1, \ldots, \lambda_{d_s} \ge 0$ for the eigenvalues of $\bar\Sigma$,
\[
  \log\det\!\bigl(I + \sigma^{-2}\bar\Sigma\bigr) \;=\; \sum_{i=1}^{d_s}\log\!\bigl(1 + \lambda_i/\sigma^2\bigr),
\]
which is increasing in each $\lambda_i$. The sample covariance $\widehat\Sigma_M := \tfrac{1}{M-1}\sum_m (f_{\psi_m} - \bar f_M)(f_{\psi_m} - \bar f_M)^\top$, with $\bar f_M = \tfrac{1}{M}\sum_m f_{\psi_m}$, is a consistent estimator of $\bar\Sigma$ as $M \to \infty$ when the members are treated as independent draws from $p(\theta \mid \cD)$; in particular, $\Tr\widehat\Sigma_M \to \Tr\bar\Sigma$ in probability.
\end{proposition}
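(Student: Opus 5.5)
The plan is to prove the four claims of \Cref{prop:variance_mi} in turn: the mixture form and the MI upper bound, Loewner monotonicity, the eigenvalue formula, and consistency of $\widehat\Sigma_M$.

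\textbf{Mixture form and MI bound.} Marginalizing $\theta$ gives
\[
p(y \mid s,a,\zz) = \int \cN\bigl(y; f(s,a,\zz;\theta), \sigma^2 I\bigr)\, p(\theta\mid\cD)\, d\theta,
\]
which is a Gaussian mixture, with a finite mixture when $\theta$ is the ensemble's empirical posterior. Its mean is $\bar f$. By the law of total covariance its covariance is $\E[\sigma^2 I] + \Var[f] = \sigma^2 I + \bar\Sigma$.

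We then use \Cref{eq:epistemic}. The aleatoric term is exact: for every $\theta$, $\cH(y\mid s,a,\zz,\theta) = \tfrac{d_s}{2}\log(2\pi e\sigma^2)$, so its posterior expectation is the same constant. For the total term, the Gaussian maximizes differential entropy at fixed covariance, so
\[
\cH(y\mid s,a,\zz) \le \tfrac12 \log\det\bigl(2\pi e(\sigma^2 I + \bar\Sigma)\bigr).
\]
Subtracting, and factoring $\sigma^2$ out of the determinant, gives \Cref{eq:mi_formula}.

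For the equality claim, interpret the ``concentrated posterior'' limit as $\bar\Sigma \to 0$. In that limit both sides tend to $0$, so the bound is tight. It is cleanest to state this as asymptotic equality, with the gap being $o(\|\bar\Sigma\|)$. One can also note that in the small-spread regime the mixture is approximately Gaussian to second order. I expect this tightness statement to be the delicate point, and I will phrase it as asymptotic tightness rather than exact equality.

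\textbf{Monotonicity and eigenvalue formula.} Let $A = I + \sigma^{-2}\bar\Sigma$ and $A' = I + \sigma^{-2}\bar\Sigma'$. If $\bar\Sigma' \succeq \bar\Sigma$, then $A' \succeq A \succ 0$. Write
\[
A' = A^{1/2}\bigl(I + A^{-1/2}(A'-A)A^{-1/2}\bigr)A^{1/2}.
\]
The middle factor is $I$ plus a PSD matrix, so its eigenvalues are all $\ge 1$ and its determinant is $\ge 1$. Hence $\det A' \ge \det A$, and taking logs gives the Loewner monotonicity.

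For the eigenvalue formula, diagonalize $\bar\Sigma = U\Lambda U^\top$. Then $\det(I + \sigma^{-2}\bar\Sigma) = \prod_i (1 + \lambda_i/\sigma^2)$, and each term $\log(1 + \lambda_i/\sigma^2)$ is increasing in $\lambda_i$.

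\textbf{Consistency.} Treat the members $f_{\psi_m}$ as i.i.d.\ draws of $f(s,a,\zz;\theta)$ with $\theta \sim p(\theta\mid\cD)$, and assume finite second moments. This holds, for example, when outputs are bounded on the compact input set of \Cref{asm:lipvar}. Expand
\[
\widehat\Sigma_M = \frac{M}{M-1}\Bigl(\tfrac1M\sum_m f_{\psi_m} f_{\psi_m}^\top - \bar f_M \bar f_M^\top\Bigr).
\]
By the weak law of large numbers, $\tfrac1M\sum_m f_{\psi_m} f_{\psi_m}^\top \to \E[ff^\top]$ and $\bar f_M \to \bar f$ in probability. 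The continuous mapping theorem then gives $\widehat\Sigma_M \to \E[ff^\top] - \bar f\bar f^\top = \bar\Sigma$, and $M/(M-1) \to 1$. Since the trace is linear and continuous, $\Tr\widehat\Sigma_M \to \Tr\bar\Sigma$ in probability.

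Finally, the reward channel is handled by appending it to $y$, which enlarges the dimension by one. This changes none of the arguments above.
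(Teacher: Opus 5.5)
Your proposal is correct and follows the paper's proof essentially step for step: the law of total covariance, the maximum-entropy Gaussian bound on the total entropy, the $\theta$-independent aleatoric term, determinant monotonicity, and the weak law plus continuous mapping. Two small differences are harmless. Your congruence factorization of $A'$ simply replaces the paper's appeal to Minkowski's determinant inequality. Your finite-second-moment hypothesis is slightly weaker than the paper's uniform boundedness of $f$.

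On tightness, the paper establishes only what you already have: at a point-mass posterior the mixture is Gaussian and both sides vanish. Drop the $o(\|\bar\Sigma\|)$ refinement rather than try to prove it, because it does not follow from $\bar\Sigma \to 0$ alone. Take a two-point posterior with mass $\epsilon$ on a prediction at distance $\epsilon^{-1/4}$ from the other. It has $\Tr\bar\Sigma \approx \epsilon^{1/2}$, while the mutual information is at most the binary entropy of $\epsilon$, which is $O(\epsilon\log(1/\epsilon))$. The gap is therefore of the same order as $\|\bar\Sigma\|$. The refinement needs extra structure, for instance a posterior of fixed shape shrinking in scale, where it follows from the low-SNR expansion of mutual information.
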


\begin{proof}
We verify each piece of the claim.

\emph{Marginal covariance.} For a random target $y = f(s, a, \zz; \theta) + \varepsilon$ with $\theta \sim p(\theta \mid \cD)$ and $\varepsilon \sim \cN(0, \sigma^2 I)$ independent, the law of total covariance gives
\[
  \Var[y \mid s, a, \zz] \;=\; \underbrace{\E_\theta[\Var_\varepsilon(y \mid \theta)]}_{=\,\sigma^2 I} \;+\; \underbrace{\Var_\theta[\E_\varepsilon(y \mid \theta)]}_{=\,\Var_\theta[f] = \bar\Sigma} \;=\; \sigma^2 I + \bar\Sigma.
\]

\emph{Gaussian upper bound on the marginal entropy.} The marginal $p(y \mid s, a, \zz)$ is a mixture of Gaussians with common covariance $\sigma^2 I$ and means $\{f(s, a, \zz; \theta)\}_{\theta \sim p(\theta \mid \cD)}$. For any random variable $Y$ with mean $\bar f$ and covariance $\sigma^2 I + \bar\Sigma$, the differential entropy is upper-bounded by that of the Gaussian with the same first two moments~\citep{coverthomas2006information}:
\[
  \cH(y \mid s, a, \zz) \;\le\; \tfrac{1}{2}\log\det\bigl(2\pi e\,(\sigma^2 I + \bar\Sigma)\bigr),
\]
with equality if and only if $y \mid s, a, \zz$ is itself Gaussian. The mixture is Gaussian when the posterior $p(\theta \mid \cD)$ concentrates at a single point or when $f$ depends affinely on $\theta$, providing the tightness conditions.

\emph{Conditional entropy is $\theta$-independent.} For each fixed $\theta$, $y$ is Gaussian with covariance $\sigma^2 I$ only, so
\[
  \cH(y \mid s, a, \zz, \theta) \;=\; \tfrac{1}{2}\log\det\bigl(2\pi e \sigma^2 I\bigr),
\]
which does not depend on $\theta$. Therefore $\E_{p(\theta \mid \cD)}[\cH(y \mid s, a, \zz, \theta)] = \tfrac{1}{2}\log\det(2\pi e\sigma^2 I)$.

\emph{MI bound.} Subtracting the conditional entropy from the marginal entropy bound,
\[
  \cI(y;\theta \mid s, a, \zz)
  \;=\; \cH(y \mid s, a, \zz) - \E_\theta[\cH(y \mid s, a, \zz, \theta)]
  \;\le\; \tfrac{1}{2}\log\det\bigl(I + \sigma^{-2}\bar\Sigma\bigr),
\]
which is \Cref{eq:mi_formula}.

\emph{Loewner monotonicity.} The map $A \mapsto \log\det(I + \sigma^{-2}A)$ is monotone in the Loewner order on positive semidefinite matrices: if $A' \succeq A$, then $I + \sigma^{-2}A' \succeq I + \sigma^{-2}A$ and the determinants satisfy $\det(I + \sigma^{-2}A') \ge \det(I + \sigma^{-2}A)$ by Minkowski's determinant inequality (or directly by the eigenvalue inequalities). Since the eigenvalues of $\bar\Sigma$ are nonnegative and $\det(I + \sigma^{-2}\bar\Sigma) = \prod_i (1 + \lambda_i/\sigma^2)$, the right-hand side of \Cref{eq:mi_formula} is increasing in each eigenvalue.

\emph{Consistency of the ensemble estimator.} Treating the $M$ members as independent draws from $p(\theta \mid \cD)$ and assuming $f$ is uniformly bounded, $\widehat\Sigma_M$ converges in probability to $\bar\Sigma$ as $M \to \infty$ by the weak law of large numbers. By the continuous mapping theorem applied to $\Tr$, $\Tr\widehat\Sigma_M \to \Tr\bar\Sigma$ in probability.
\end{proof}

\begin{remark}[Trace as a scalar proxy]
\label{rem:trace_proxy}
The right-hand side of \Cref{eq:mi_formula} depends on the full eigenvalue spectrum of $\bar\Sigma$, not only on its trace: covariance matrices with the same trace can give different MI bounds. The trace $\Tr\bar\Sigma = \sum_i \lambda_i$ is nevertheless a scalar proxy in the following sense: it is a continuous function of $\bar\Sigma$, vanishes if and only if $\bar\Sigma = 0$, and is bounded above by a function of $\log\det(I + \sigma^{-2}\bar\Sigma)$. In particular, if $\Tr\bar\Sigma \ge c$ for some constant $c$, then, since $\sum_i \log(1+\lambda_i) \ge \log(1 + \sum_i \lambda_i)$ for $\lambda_i \ge 0$ (expand the product $\prod_i (1 + \lambda_i)$),
\[
  \log\det(I + \sigma^{-2}\bar\Sigma) \;\ge\; \log\!\bigl(1 + \Tr\bar\Sigma/\sigma^2\bigr) \;\ge\; \log(1 + c/\sigma^2),
\]
so a positive lower bound on $\Tr\bar\Sigma$ translates to a positive lower bound on the MI bound. \Cref{thm:nonvanish} reads its information-theoretic consequence off the variance directly (\Cref{app:proof_nonvanish}) and does not rely on this bound.
\end{remark}

\begin{remark}[Per-channel aggregation]
\label{rem:per_channel}
The decoder predicts both $\Delta s$ and the reward $r$, and \Cref{eq:u} averages the per-coordinate variance over the $d_s + 1$ output channels. In the notation above, this corresponds to $\tfrac{1}{d_s + 1}\Tr\widehat\Sigma_M$, where $\widehat\Sigma_M$ is now the joint $(d_s + 1) \times (d_s + 1)$ covariance over both channels. If the channels are conditionally independent given $\theta$ and the posterior covariance $\bar\Sigma$ is diagonal across channels (a reasonable approximation after target normalization), the joint MI decomposes as $\cI(\text{joint}) = \sum_j \cI(\text{channel } j)$, and $\Tr\bar\Sigma$ remains a monotone proxy for the joint MI bound. We treat the two channels on a common scale throughout.
\end{remark}

\section{Proofs}
\label{app:proofs}

We restate each result and give a full proof. Throughout, $\cZ_\text{train} := \supp(q_\phi) \subset \R^d$ denotes the training latent support and $\mathrm{dist}(\zz, \cZ_\text{train}) := \inf_{\zz' \in \cZ_\text{train}} \|\zz - \zz'\|$ the distance to it. We continue to write $\bar\Sigma(\zz)$ for the across-member covariance defined in \Cref{app:mi_proxy} and $\Tr\bar\Sigma(\zz)$ for its trace; by \Cref{rem:trace_proxy}, lower bounds on $\Tr\bar\Sigma$ transfer to lower bounds on the MI bound.

\subsection{Proof of \Cref{prop:outward} (off-support guarantee)}
\label{app:proof_outward}

Write $\hat n_i = (\zz_i - c)/d_i \in S^{d-1}$ for the unit radial direction at $\zz_i$. By \Cref{eq:lore},
\[
  \zz_i^\text{aug} - c
  \;=\; (\zz_i - c) + \alpha\sigma_d (d_i/\bar d)^p \,\hat n_i
  \;=\; \bigl(d_i + \alpha\sigma_d (d_i/\bar d)^p\bigr)\,\hat n_i,
\]
since $\zz_i - c$ and $\hat n_i$ are positively collinear. Taking norms,
\[
  \|\zz_i^\text{aug} - c\|
  \;=\; d_i + \alpha\sigma_d(d_i/\bar d)^p \;>\; d_i,
\]
so the centroid distance strictly increases. If $\cZ_\text{train} \subseteq B(c, R)$, then any point at centroid distance greater than $R$ lies outside $\cZ_\text{train}$. Thus $\zz_i^\text{aug} \notin \cZ_\text{train}$ as soon as
\[
  d_i + \alpha\sigma_d(d_i/\bar d)^p \;>\; R,
  \quad\text{equivalently,}\quad
  \alpha\sigma_d(d_i/\bar d)^p \;>\; R - d_i.
\]
For peripheral $\zz_i$ with $d_i \ge \bar d$, the factor $(d_i/\bar d)^p \ge 1$, so $\alpha \ge (R - d_i)/\sigma_d$ is sufficient. $\quad\square$

\subsection{Proof of \Cref{thm:nonvanish} (persistence of off-support disagreement)}
\label{app:proof_nonvanish}

\paragraph{Setup.} Fix $(s, a)$ and an input latent $\zz$. Stack the member predictions at step $t$ as $F^t := \bigl(f_{\psi_1^t}(\zz), \ldots, f_{\psi_M^t}(\zz)\bigr)$ and equip the stack with the norm $\|F\| := \bigl(\tfrac{1}{M}\sum_m \|f_m\|^2\bigr)^{1/2}$. Let $P$ be the orthogonal projection onto the consensus subspace $\{F : f_1 = \cdots = f_M\}$, which maps every member to the ensemble mean $\bar f$. Then
\[
  \sigma(F) \;:=\; \|(I - P)F\| \;=\; \Bigl(\tfrac{1}{M}\sum_m \|f_m - \bar f\|^2\Bigr)^{1/2} \;=\; \sqrt{\Tr\bar\Sigma(\zz)},
\]
the across-member variance used throughout. Since $I - P$ is an orthogonal projection, $\sigma$ is $1$-Lipschitz: $|\sigma(F) - \sigma(F')| \le \|F - F'\|$.

\paragraph{Proof.} By \Cref{asm:lipvar}, for every $t \ge t_0$ and every member, $\|f_{\psi_m^t}(\zz) - f_{\psi_m^{t_0}}(\zz)\| \le L_\psi \|\psi_m^t - \psi_m^{t_0}\| \le L_\psi D_m$, so $\|F^t - F^{t_0}\| \le L_\psi \bigl(\tfrac{1}{M}\sum_m D_m^2\bigr)^{1/2} = L_\psi \bar D$. The Lipschitz property of $\sigma$ gives $\sigma(F^t) \ge \sigma(F^{t_0}) - L_\psi \bar D$, which is positive by \Cref{asm:margin}; squaring yields $\Tr\bar\Sigma_t(\zz) \ge c$ for all $t \ge t_0$.

For the information-theoretic statement, note that at a fixed input the members' predictive laws $\cN(f_{\psi_m}(\zz), \sigma^2 I)$ share the covariance $\sigma^2 I$ and therefore coincide exactly when $\Tr\bar\Sigma_t(\zz) = 0$. Under the ensemble's empirical posterior, uniform over members, the epistemic mutual information $\cI(s'; \theta \mid s, a, \zz)$ is the generalized Jensen--Shannon divergence of these laws, which vanishes if and only if they coincide. A positive lower bound on the variance is therefore equivalent to strictly positive epistemic mutual information. \hfill$\square$

\paragraph{Where the margin holds.} The inequality itself does not distinguish on-support from off-support inputs; the distinction lies in \Cref{asm:margin}. At an on-support latent, the optimizer spends its parameter movement on reducing the disagreement at that very input, so the realized reduction approaches the budget $L_\psi \bar D$ and the margin fails, consistent with the decay of on-support disagreement during training. At an off-support latent, the movement is not directed at reducing the disagreement there, so the realized reduction falls well short of the budget and the margin can hold. Since $\bar D$ is a tail sum, it is non-increasing in $t_0$ and shrinks as training converges, so the margin is easier to satisfy at later anchors; the theorem turns a condition that can be checked at a single step into a guarantee for every later step. The result is qualitative and does not quantify the size of the off-support disagreement, which in practice is far larger than the bound requires, as we verify next.

\paragraph{Empirical check.} Because the theorem is conditional, we check its conclusion directly on training checkpoints of PLATO on Hopper-Friction ($11$ checkpoints, $3$ seeds), measuring the per-task disagreement $u$ of \Cref{eq:u} on the training and out-of-support tasks. On-support disagreement decays by a factor of $7$--$8$ over training, while off-support disagreement never falls below $11.5$ times the final on-support level and ends $27.8$ times higher (per seed: $35.2$, $27.9$, $20.4$). Off-support disagreement therefore does not collapse at any point in training, as \Cref{thm:nonvanish} asserts where its margin holds.

\subsection{Proof of \Cref{thm:decoupling} (critic bias under mixed-data Bellman targets)}
\label{app:proof_decoupling}

The theorem is a worst-case statement: it identifies a configuration $(P, \hat P)$ of true and approximate transition kernels with $\sup_{s,a}\|\hat P - P\|_\text{TV} = \varepsilon_P$ at which the lower bound is realized. The proof proceeds by bounding the operator gap, then propagating it through the Bellman fixed-point equation.

\paragraph{Setup.} Consider an offline actor--critic in which the Q-function is trained on a mixture
\[
  \rho_\beta \;=\; (1 - \beta)\,\rho_\text{real} + \beta\,\rho_\text{imag}, \qquad \beta \in [0, 1],
\]
with $\rho_\text{imag}$ generated from a transition model $\hat P$ of total-variation error $\varepsilon_P := \sup_{s,a}\|\hat P(\cdot \mid s, a) - P(\cdot \mid s, a)\|_\text{TV}$. Let $\cT$ be the true Bellman operator and $\hat\cT_\beta$ the Bellman operator under $\rho_\beta$ (transitions sampled according to $\rho_\beta$, using $\hat P$ for the imagined share). Both are $\gamma$-contractions on bounded $Q$. Let $Q^* = \cT Q^*$ and $\hat Q_\beta = \hat\cT_\beta \hat Q_\beta$ be the respective unique fixed points (uniqueness follows from contraction).

\paragraph{Step 1 (operator-gap bound).} For any bounded $Q$ with $V(s) := \max_a Q(s, a)$ and any $(s, a)$,
\begin{align*}
  \bigl|(\hat\cT_\beta Q)(s, a) - (\cT Q)(s, a)\bigr|
  &= \gamma\,\bigl|\E_{s' \sim (1-\beta)P + \beta\hat P}[V(s')] - \E_{s' \sim P}[V(s')]\bigr| \\
  &= \gamma\beta\,\bigl|\E_{s' \sim \hat P}[V(s')] - \E_{s' \sim P}[V(s')]\bigr| \\
  &\le \gamma\beta\,\|V\|_\infty\,\|\hat P(\cdot \mid s, a) - P(\cdot \mid s, a)\|_\text{TV}
   \;\le\; \gamma\beta\,\|V\|_\infty\,\varepsilon_P,
\end{align*}
using the dual representation $|\E_{\mu}[V] - \E_{\nu}[V]| \le \|V\|_\infty \|\mu - \nu\|_\text{TV}$. Taking the supremum over $(s, a)$,
\begin{equation}
  \|\hat\cT_\beta Q - \cT Q\|_\infty \;\le\; \gamma\beta\,\|V\|_\infty\,\varepsilon_P.
  \label{eq:operator_gap_upper}
\end{equation}
The upper bound \Cref{eq:operator_gap_upper} is tight in the worst case: for adversarially chosen $(P, \hat P)$ supported on the worst-case state pair (where $V$ achieves its maximum on $\hat P$ and minimum on $P$, or vice versa), the inequality holds with equality. The theorem is a statement about this worst case; the additive $O(\varepsilon_\text{data})$ correction absorbs the finite-sample error of the real-data share.

\paragraph{Step 2 (fixed-point gap, lower bound).} Let $\Delta := \hat Q_\beta - Q^*$. Then
\begin{align*}
  \Delta &= \hat\cT_\beta \hat Q_\beta - \cT Q^* \\
  &= \underbrace{(\hat\cT_\beta \hat Q_\beta - \hat\cT_\beta Q^*)}_{\text{contraction term}} \;+\; \underbrace{(\hat\cT_\beta Q^* - \cT Q^*)}_{\text{operator gap}}.
\end{align*}
The contraction term satisfies $\|\hat\cT_\beta \hat Q_\beta - \hat\cT_\beta Q^*\|_\infty \le \gamma\|\hat Q_\beta - Q^*\|_\infty = \gamma\|\Delta\|_\infty$. By the reverse triangle inequality $\|A + B\|_\infty \ge \|B\|_\infty - \|A\|_\infty$ applied with $A$ the contraction term and $B$ the operator gap,
\[
  \|\Delta\|_\infty \;\ge\; \|\hat\cT_\beta Q^* - \cT Q^*\|_\infty - \gamma\|\Delta\|_\infty.
\]
Rearranging,
\begin{equation}
  \|\Delta\|_\infty \;\ge\; \frac{\|\hat\cT_\beta Q^* - \cT Q^*\|_\infty}{1 + \gamma}.
  \label{eq:fixed_point_lower}
\end{equation}
In the worst case, $\|\hat\cT_\beta Q^* - \cT Q^*\|_\infty = \gamma\beta\varepsilon_P\|V^*\|_\infty$ (Step 1), so substituting,
\[
  \|\hat Q_\beta - Q^*\|_\infty \;\ge\; \frac{\gamma\beta\,\varepsilon_P\,\|V^*\|_\infty}{1 + \gamma} - O(\varepsilon_\text{data}),
\]
where the $O(\varepsilon_\text{data})$ accounts for finite-sample noise in the real-data share. Using the standard reward bound $\|V^*\|_\infty \le R_\text{max}/(1-\gamma)$ yields $\|\hat Q_\beta - Q^*\|_\infty \ge \tfrac{\gamma\beta R_\text{max}}{(1+\gamma)(1-\gamma)}\,\varepsilon_P - O(\varepsilon_\text{data})$; folding the constant $R_\text{max}/(1+\gamma)$ into the prefactor recovers \Cref{eq:contamination} with constant $\gamma\beta/(1-\gamma)$.

\paragraph{Step 3 (decoupled case, $\beta = 0$).} In PLATO, $\hat Q$ is trained with $\beta = 0$: only real transitions appear in the Bellman target. The operator gap in Step 1 vanishes ($\beta = 0$ kills the prefactor), and the fixed-point equation becomes $\hat Q_0 = \cT \hat Q_0$, identical to the true one. By uniqueness of the Bellman fixed point, $\hat Q_0 = Q^*$ in the population limit. In finite samples, $\|\hat Q_0 - Q^*\|_\infty = O(\varepsilon_\text{data})$ — the standard offline-RL estimation error — with no term linear in $\varepsilon_P$. Synthetic transitions still affect $\pi_\theta$ through \Cref{eq:actor_synth}, but they enter only the actor objective, not the Bellman target, so any bias they introduce remains in $\pi_\theta$'s parameters and not in $Q_\omega$. $\quad\square$

\begin{remark}[Scope]
The theorem is a worst-case lower bound: it identifies $(P, \hat P)$ pairs at which mixing imagined data into the Bellman target induces bias linear in $\varepsilon_P$. It does not say that contamination is always this large, only that it can be. When $\varepsilon_P$ is small, the rollout model is accurate on the test distribution, and MOPO-style designs that mix imagined transitions into the critic are effective. The bound matters in the regime where $\varepsilon_P$ is genuinely large, which is precisely where PLATO operates: the rollout model is queried at $\zz^\text{aug}$, a latent it was never trained at.
\end{remark}

\subsection{Proof of \Cref{prop:gengap} (out-of-support generalization gap)}
\label{app:proof_gengap}

The proof combines a Lipschitz argument along the latent task manifold with the geometric reach bound from \Cref{prop:outward}.

By the assumed $L_J$-Lipschitz continuity of $J(\pi; \cdot)$ along $\cM_z$,
\[
  |J(\pi; \theta_\star) - J(\pi; \theta_0)|
  \;\le\; L_J \cdot d_\cM\bigl(\phi(\theta_\star),\, \phi(\theta_0)\bigr),
\]
where $d_\cM(\cdot, \cdot)$ is geodesic distance on $\cM_z$. Suppose $\phi(\theta_\star)$ lies within the perturbation reach in the sense that there exists $\zz_i \in \cM_\text{train}$ with
\[
  \|\phi(\theta_\star) - \zz_i^\text{aug}\| \;\le\; \delta,
\]
where $\delta$ is the covering radius of the perturbed image $\{\zz_i^\text{aug}\}$ around $\phi(\theta_\star)$. By \Cref{prop:outward},
\[
  \|\zz_i^\text{aug} - \zz_i\|
  \;=\; \alpha\sigma_d(d_i/\bar d)^p
  \;\le\; \alpha\sigma_d(d_\text{max}/\bar d)^p,
\]
where $d_\text{max} = \max_i d_i$. Choose $\theta_0 := \phi^{-1}(\zz_i) \in \Theta_\text{train}$, well-defined by injectivity of $\phi$ in \Cref{asm:manifold}. By the triangle inequality on $\R^d$,
\[
  \|\phi(\theta_\star) - \phi(\theta_0)\|
  \;\le\; \|\phi(\theta_\star) - \zz_i^\text{aug}\| + \|\zz_i^\text{aug} - \zz_i\|
  \;\le\; \delta + \alpha\sigma_d(d_\text{max}/\bar d)^p.
\]
For points lying on a smooth submanifold $\cM_z$ of $\R^d$, geodesic distance and Euclidean distance agree to leading order: there is a curvature constant $\kappa_\cM$ such that for any $\zz, \zz' \in \cM_z$,
\[
  d_\cM(\zz, \zz') \;\le\; \|\zz - \zz'\| \;+\; O\!\bigl(\kappa_\cM \|\zz - \zz'\|^2\bigr).
\]
Applying this with $\zz = \phi(\theta_\star)$ and $\zz' = \phi(\theta_0)$,
\[
  d_\cM(\phi(\theta_\star), \phi(\theta_0))
  \;\le\; \alpha\sigma_d(d_\text{max}/\bar d)^p + O(\delta) + O\!\bigl(\kappa_\cM (\delta + \alpha\sigma_d(d_\text{max}/\bar d)^p)^2\bigr).
\]
Substituting into the Lipschitz bound and absorbing the higher-order terms into $O(\delta)$,
\[
  |J(\pi; \theta_\star) - J(\pi; \theta_0)| \;\le\; L_J\,\alpha\sigma_d(d_\text{max}/\bar d)^p \;+\; O(\delta),
\]
which is the claim. $\quad\square$

\section{Environment Details}
\label{app:env}

We evaluate on eight MuJoCo continuous-control offline meta-RL benchmarks, following the task families used in prior offline meta-RL work~\citep{rakelly2019efficient, li2020focal, gao2023csro, li2024unicorn, nakhaei2025ertrl}. Four families vary the reward function while keeping the dynamics fixed (\textbf{Ant-Dir}, \textbf{Ant-Goal}, \textbf{Cheetah-Vel}, \textbf{Humanoid-Dir}); the other four vary the dynamics while keeping the reward fixed (\textbf{Hopper-Mass}, \textbf{Hopper-Friction}, \textbf{Walker-Mass}, \textbf{Walker-Friction}). Each environment uses a $200$-step episode horizon, and the agent observes the standard MuJoCo proprioceptive state (excluding the global $x$-coordinate where applicable). \Cref{tab:env-summary} summarizes the specifications, and \Cref{app:env-manip} additionally describes the manipulation task used in \Cref{app:metaworld}.

Each environment defines a $40$-point uniform grid over its task parameter, from which we select $30$ tasks: $20$ in-distribution training tasks and $10$ out-of-support evaluation tasks taken from the parameter extremes. The remaining $10$ grid points fall in the intermediate region between the two splits and are not used in this paper, since our focus is the harder out-of-support setting, in which evaluation tasks lie far outside the training support.

\begin{table}[!htb]
\caption{Environment summary. $|S|$ and $|A|$ are the state and action dimensions, and $|\Theta|$ is the number of tasks used in this paper. The ID and out-of-support columns give the numbers of in-distribution training tasks and out-of-support evaluation tasks; the out-of-support split is held out far from the training support (see the per-environment descriptions below). For MetaWorld Reach, the remaining $10$ tasks (the corners of the goal box) are used only for the extrapolation-limit analysis in \Cref{app:metaworld}.}
\label{tab:env-summary}
\centering\small\setlength{\tabcolsep}{4pt}
\begin{tabular}{lcccccc}
\toprule
Environment & Type & $|S|$ & $|A|$ & $|\Theta|$ & ID & Out-of-support \\
\midrule
Ant-Dir         & Reward shift   & 27  & 8  & 30 & 20 & 10 \\
Ant-Goal        & Reward shift   & 27  & 8  & 30 & 20 & 10 \\
Cheetah-Vel     & Reward shift   & 17  & 6  & 30 & 20 & 10 \\
Humanoid-Dir    & Reward shift   & 376 & 17 & 30 & 20 & 10 \\
\midrule
Hopper-Mass     & Dynamics shift & 11  & 3  & 30 & 20 & 10 \\
Hopper-Friction & Dynamics shift & 11  & 3  & 30 & 20 & 10 \\
Walker-Mass     & Dynamics shift & 17  & 6  & 30 & 20 & 10 \\
Walker-Friction & Dynamics shift & 17  & 6  & 30 & 20 & 10 \\
\midrule
MetaWorld Reach & Reward shift   & 36  & 4  & 40 & 20 & 10 \\
\bottomrule
\end{tabular}
\end{table}

\subsection{Reward-shift environments}

\textbf{Ant-Dir.} The Ant is rewarded for moving along a target heading $\theta \in [0, 2\pi)$. The reward is
\[
  r = \mathbf{v}_{xy} \cdot (\cos\theta, \sin\theta) - 0.5\|a\|^{2} - 5 \times 10^{-4}\,\|F_{\text{contact}}\|^{2} + 1,
\]
where $\mathbf{v}_{xy}$ is the planar velocity of the torso. Episodes terminate if the torso $z$-coordinate leaves $[0.2, 1.0]$. The headings form a uniform grid on the circle. The \emph{in-distribution} split is the rear-facing semicircle $\theta \in [\pi/2, 3\pi/2)$ ($20$ contiguous directions covering the back half-plane), and the \emph{out-of-support} split is the front-facing arc at the angular extremes of the grid, $\theta \in [0, \pi/4) \cup [7\pi/4, 2\pi)$ ($10$ directions in total). The out-of-support set lies roughly antipodal to the training arc, far outside the training support in heading space.

\textbf{Ant-Goal.} The Ant must reach a target position $g = (g_x, g_y)$. The reward is
\[
  r = -|x - g_x| - |y - g_y| - 0.1\|a\|^{2} - 5 \times 10^{-4}\|F_{\text{contact}}\|^{2}
\]
with no survival bonus and no termination by falling. The $20$ \emph{in-distribution} goals are placed on two concentric arcs in the right half-plane, $g = (r\cos\theta, r\sin\theta)$ with $\theta \in [-\pi/2, \pi/2]$ and $r \in \{0.8, 1.2\}$. The $10$ \emph{out-of-support} goals lie at a strictly larger radius, $r = 1.6$, and at angles drawn from the left half-plane, $\theta \in [\pi/2, 3\pi/2)$. The out-of-support set is therefore held out simultaneously in radius (radial extrapolation outward) and in angle (extrapolation into a half-plane never seen during training), producing a comparatively large semantic gap from the training distribution.

\textbf{Cheetah-Vel.} The HalfCheetah is rewarded for matching a target forward velocity $v^\star$. The reward is $r = -|v - v^\star| - 0.05\|a\|^{2}$, and episodes never terminate before the horizon. Target velocities form a uniform grid over $v^\star \in [0.1, 4.0]$. The \emph{in-distribution} split is the central band $v^\star \in [1.1, 3.0]$ ($20$ velocities), and the \emph{out-of-support} split is taken from the extreme low and high ends of the range, $v^\star \in [0.1, 0.5]$ and $v^\star \in [3.6, 4.0]$ ($10$ velocities in total), far outside the training support on both sides.

\textbf{Humanoid-Dir.} The Humanoid is rewarded for moving along a target heading. The reward is
\[
  r = 0.25\,\mathbf{v}_{xy} \cdot (\cos\theta, \sin\theta) - 0.1\|a\|^{2} - 5 \times 10^{-7}\min(\|F_{\text{contact}}\|^{2}, 2 \times 10^{7}) + 5,
\]
with an alive bonus of $5$ and a capped contact penalty. Episodes terminate if the torso height leaves $[1.0, 2.0]$. The headings form the same uniform grid on the circle as in Ant-Dir. The \emph{in-distribution} split is the rear-facing semicircle $\theta \in [\pi/2, 3\pi/2)$, and the \emph{out-of-support} split is the front-facing arc at the angular extremes, $\theta \in [0, \pi/4) \cup [7\pi/4, 2\pi)$ ($10$ directions). Forward headings on either side of $\theta = 0$ lie roughly antipodal to the rear-facing training arc and therefore far outside the training distribution.

\subsection{Dynamics-shift environments}

The four dynamics-shift environments share the per-step reward of the underlying MuJoCo locomotion benchmark, $r = (x_{t} - x_{t-1})/\Delta t + 1 - 10^{-3}\|a\|^{2}$, and its termination criteria (Hopper requires $z > 0.7$ and $|\theta| < 0.2$; Walker requires $0.8 < z < 2.0$ and $|\theta| < 1.0$). Tasks are generated by scaling a single physics parameter (body mass and inertia for the {-Mass} environments, ground friction for the {-Friction} environments) by $\sigma = 1.5^{\lambda}$, with $\lambda$ on a uniform grid over $[-2, 2]$. The resulting scale factors span $\sigma \in [0.44, 2.25]$, so the heaviest task is roughly five times the lightest. The \emph{in-distribution} split is the central band $\sigma \in [0.66, 1.52]$ (the $20$ tasks closest to nominal physics), and the \emph{out-of-support} split is taken from both extreme ends of the range, $\sigma \in [0.44, 0.53]$ and $\sigma \in [1.95, 2.25]$ ($10$ tasks in total). Out-of-support evaluation therefore tests dynamics extrapolation far outside the training band on both sides in all four environments.

\subsection{Manipulation environment}
\label{app:env-manip}

\textbf{MetaWorld Reach.} A Sawyer arm must move its end effector to a goal position $g \in \R^3$ (MetaWorld ML1 \texttt{reach-v3}~\citep{yu2019metaworld}). The agent observes the $36$-dimensional robot and object state (MetaWorld's $39$-dimensional observation with the three goal coordinates removed, so that the goal must be inferred from context) and takes $4$-dimensional actions. Episodes last $200$ steps, the reward is MetaWorld's dense reach reward, and success follows MetaWorld's criterion (end effector within $5$\,cm of the goal). Goals lie in the box $x \in [-0.10, 0.10]$, $y \in [0.80, 0.90]$, $z \in [0.05, 0.30]$, which we divide into a $5 \times 5 \times 5$ grid of cells whose centers serve as goals. The \emph{in-distribution} split consists of the $20$ cells of the inner $3 \times 3 \times 3$ cube closest to the center of the box, and the \emph{out-of-support} split consists of the $10$ cells of the surrounding outer shell closest to the training region, so every held-out goal lies outside the region covered by the training goals. The $10$ outer-shell cells farthest from it (the corners of the box, at about $1.9\times$ the training radius) are used only for the extrapolation-limit analysis in \Cref{app:metaworld}.

\subsection{Empirical support for the manifold hypothesis}
\label{app:env-pca}

\Cref{asm:manifold} posits that, on average, out-of-support latents lie farther from the training centroid than in-distribution latents. We test this directly and summarize the supporting evidence from the main paper.

\paragraph{Direct test.} For each task, we encode $K = 256$ context transitions with the trained PLATO encoder and compare the mean distance to the training centroid between out-of-support and training tasks, using a one-sided Mann--Whitney test per seed and combining the seeds with Fisher's method. On four representative environments, out-of-support latents lie measurably farther from the centroid on Ant-Goal (distance ratio $1.19$, $p = 2.3 \times 10^{-7}$), Cheetah-Vel ($1.24$, $p = 3.7 \times 10^{-4}$), and Ant-Dir ($1.08$, $p = 1.9 \times 10^{-3}$), whereas on Walker-Mass the ratio is below one ($0.87$). We therefore read \Cref{asm:manifold} as a sufficient condition for the outward direction to be informative rather than as a property the method requires. PLATO still attains the highest out-of-support return on Walker-Mass (\Cref{tab:ood}), because its pessimism weight is driven by ensemble disagreement rather than by latent distance (\Cref{app:difficulty}).

\paragraph{Supporting evidence.} \Cref{fig:latent_geometry} visualizes the PCA-projected latent space on Ant-Goal: ID latents concentrate around the centroid in a low-disagreement region, while out-of-support latents lie farther out on average. \Cref{fig:disagreement} shows the corresponding signal on Hopper-Friction and Cheetah-Vel through the decoder-ensemble disagreement $u(t)$, a monotone proxy for epistemic uncertainty (\Cref{prop:variance_mi}): ID disagreement is concentrated near zero, while out-of-support disagreement is several-fold to an order of magnitude larger.

\subsection{Disagreement tracks task difficulty}
\label{app:difficulty}

\Cref{fig:latent_geometry} shows that out-of-support tasks exhibit higher ensemble disagreement on average. Here we ask whether this signal also reflects \emph{difficulty}, that is, whether the disagreement at a task's latent indicates how hard the task is for an agent trained without off-support exposure. For each task $t$, we compute two statistics from the trained PLATO checkpoints of the main experiments, averaged over seeds: the distance of the task's mean latent from the training centroid, $\|\bar\zz_t - c\|$, and the per-task decoder-ensemble disagreement $u(t)$ used in \Cref{sec:experiments} (Q4). As the measure of difficulty, we use the normalized regret of a naively trained agent,
\[
  y(t) \;=\; \frac{R_\text{expert}(t) - R_\text{CSRO}(t)}{|R_\text{expert}(t)|},
\]
where $R_\text{expert}(t)$ is the return of the per-task SAC expert used for data collection (\Cref{app:data}) and $R_\text{CSRO}(t)$ is the return of CSRO. Because CSRO shares PLATO's encoder objective (\Cref{sec:backbone}) but never trains the policy on off-support latents, $y(t)$ isolates the policy-level gap that PLATO targets. CSRO is evaluated with contexts drawn from each task's offline data and ten deterministic rollouts per task, and $y(t)$ is averaged over three independent evaluation repetitions.

\begin{figure}[!htb]
\centering
\includegraphics[width=\linewidth]{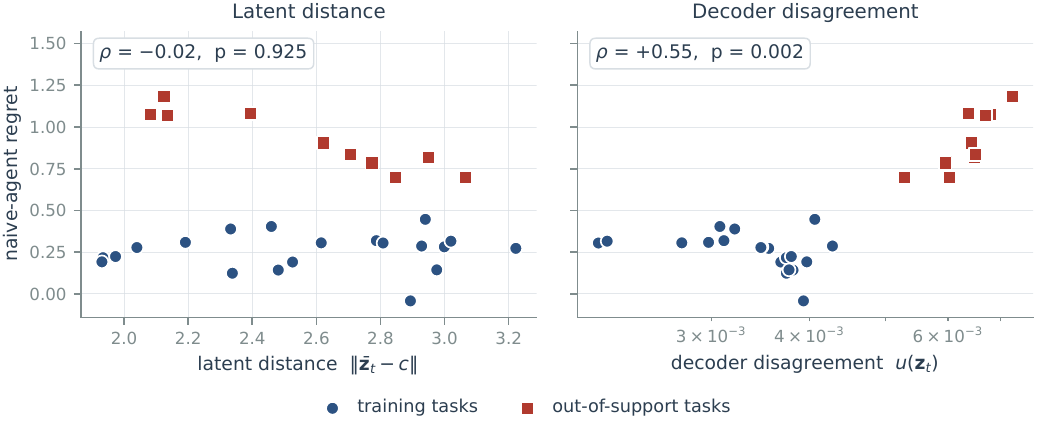}
\caption{\textbf{Ensemble disagreement tracks task difficulty; latent distance does not.} Each point is one Ant-Dir task ($20$ training and $10$ out-of-support tasks). Vertical axis: normalized regret of a naively trained agent (CSRO), $(R_\text{expert} - R_\text{CSRO})/|R_\text{expert}|$. Left: distance of the task's mean latent from the training centroid. Right: decoder-ensemble disagreement $u$ at the task's latent (log scale). $\rho$ is the Spearman rank correlation over all $30$ tasks.}
\label{fig:difficulty}
\end{figure}

\Cref{fig:difficulty} reports the result on Ant-Dir. Disagreement tracks difficulty: the rank correlation between $u(t)$ and $y(t)$ is $\rho = 0.55$ ($p = 0.002$) and is stable across the three evaluation repetitions ($0.52$--$0.63$). The relation also holds within the out-of-support split ($\rho = 0.77$, $p = 0.009$), so $u$ orders the held-out tasks by difficulty rather than merely separating them from the training tasks. That separation is itself complete: every out-of-support task has a larger disagreement than every training task, in line with a mean regret of $0.91$ against $0.26$. Latent distance, in contrast, carries no information about difficulty ($\rho = -0.02$, $p = 0.92$). The same picture holds on Hopper-Friction ($\rho = 0.40$, $p = 0.03$ for disagreement; $\rho = 0.05$ for distance) and, at the level of individual held-out tasks, on Ant-Goal, the environment of \Cref{fig:latent_geometry}: the five out-of-support tasks with the lowest disagreement have a lower mean regret than the five with the highest ($1.74$ versus $2.06$), consistently across four independent evaluation repetitions.

These results have two implications for PLATO. First, the pessimism weight $w_h = 1/(1 + \lambda_\text{pess} u_h)$ of \Cref{eq:weight} discounts imagined transitions most strongly where a naively trained agent struggles most, and relaxes where the decoder already generalizes. Out-of-support tasks with comparatively low disagreement are therefore not a failure mode of the weight; they are the cases it is designed to trust more. Second, the results support scaling the pessimism by disagreement rather than by latent distance: the signal PLATO uses is the one that tracks difficulty.

\section{Offline Data Collection}
\label{app:data}

\paragraph{Per-task behavior policy.}
For every task in every environment, we collect the offline dataset with a per-task expert behavior policy. We train a Soft Actor--Critic agent~\citep{haarnoja2018soft,haarnoja2018sacv2} from scratch on the single-task MDP $\cT_t$ for $10^{6}$ environment steps and save the final actor and critic checkpoints. This per-task expert protocol matches the offline meta-RL convention of FOCAL~\citep{li2020focal}, CSRO~\citep{gao2023csro}, UNICORN~\citep{li2024unicorn}, and ER-TRL~\citep{nakhaei2025ertrl}, so all baselines and PLATO consume identical offline data.

\paragraph{Trajectory checkpoints.}
During SAC training, we record evaluation rollouts every $5 \times 10^{4}$ environment steps in the range $[10^{5}, 10^{6})$, yielding $19$ behavior-policy checkpoints per task. At each checkpoint, we store $50$ independent rollouts of length $200$ as $(s, a, r, s')$ transitions, so the per-task offline budget is at most $50 \times 19 \times 200 = 1.9 \times 10^{5}$ transitions.

\paragraph{Dataset quality.}
The behavior policy reaches a return close to the per-task asymptote within roughly $5 \times 10^{5}$ training steps on the locomotion benchmarks, so the late-checkpoint trajectories ($\geq 5 \times 10^{5}$ steps) approximate expert behavior, while early checkpoints contribute exploratory transitions. The combined dataset is therefore best characterized as a mixed expert-plus-suboptimal collection, which provides the action coverage that behavior-regularized methods such as BRAC~\citep{wu2019behavior} and CSRO~\citep{gao2023csro} require.

\paragraph{ID/out-of-support separation.}
In-distribution tasks (listed in \Cref{app:env}) are pooled to form the meta-training buffer used by the encoder, decoder ensemble, and policy updates of every method. Out-of-support tasks are stored in the same format and are loaded only at evaluation time: the evaluation routine populates a separate buffer for context-conditioned adaptation on out-of-support tasks and never feeds out-of-support transitions into any training gradient.

\paragraph{Context sampling and preprocessing.}
At each meta-training step, we sample a meta-batch of in-distribution task indices and, for each task, draw an i.i.d.\ batch of transitions from the corresponding per-task offline buffer; the encoder $q_\phi(\zz \mid c)$ consumes this batch as the context $c$ and infers the per-task latent $\zz$. Context batches are i.i.d.\ rather than sequential, as in the PEARL family of methods~\citep{rakelly2019efficient}. Observations are passed unnormalized, and rewards are multiplied by the standard reward scale of $5.0$, shared across all baselines.

\section{Implementation Details}
\label{app:impl}

This appendix collects the implementation details needed to reproduce PLATO. \Cref{tab:hp_shared} lists the hyperparameters shared across all environments, and \Cref{tab:hp_env} lists the four hyperparameters that take per-environment values.

\paragraph{Architectures.}
The context encoder $q_\phi(\zz \mid c)$ is a permutation-invariant set encoder following \citep{rakelly2019efficient}: each transition is mapped through a $3$-layer MLP with hidden size $200$ and ReLU activations, the per-transition embeddings are mean-pooled, and a final tanh-squashed projection produces the latent $\zz \in \R^{20}$. The encoder input depends on the task family: the reward-shift families use $(s, a, r)$, since their tasks share dynamics and $s'$ carries no task information, and the dynamics-shift families (Hopper and Walker, Mass and Friction) use $(s, a, r, s')$; PLATO and all baselines follow the same rule. The decoder ensemble has $M = 10$ bootstrapped members, each a $3$-layer MLP with hidden size $256$ that maps $(s, a, \zz)$ to the predicted normalized state delta $\Delta\hat s$ and reward. The twin critics $Q_{\omega_1}, Q_{\omega_2}$ and the value network $V$ are $3$-layer MLPs with hidden size $256$. The policy $\pi_\theta(a \mid s, \zz)$ is a Gaussian with a $3$-layer MLP backbone of hidden size $256$, parameterizing a state-dependent mean and log-std with a $\tanh$ squashing transform. The CLUB network is a $3$-layer MLP of hidden size $200$ that outputs the parameters of a Gaussian over $\zz$ given the transition features $x$ defined below.

\paragraph{Optimization.}
All networks are trained with Adam~\citep{kingma2015adam} at learning rate $3 \times 10^{-4}$, with two exceptions: the BRAC divergence critic $c_\omega$ introduced below uses $1 \times 10^{-4}$, and on Ant-Goal the context encoder and the CLUB network use $6 \times 10^{-4}$. Each training step samples a meta-batch of $B_T = 16$ tasks; for each task, we draw a context batch of $64$ transitions and a separate transition batch of $256$ transitions for the actor--critic update. The decoder ensemble uses a bootstrap fraction of $\rho = 0.8$. Target networks are soft-updated with rate $\tau = 0.005$. Rewards are scaled by $5.0$ before entering the critic, matching the convention used by all baselines.

\paragraph{Backbone objectives.}
The encoder is trained with the FOCAL distance-metric loss and the CLUB upper bound, with weights inherited from CSRO~\citep{gao2023csro}, so that the encoder backbone is identical across all PLATO and CSRO runs. For latents $\zz_i, \zz_j$ inferred from two contexts in a meta-batch, let $\delta_{ij} := \tfrac{1}{d}\|\zz_i - \zz_j\|^2$, and let $\mathcal{P}$ and $\mathcal{N}$ be the sets of pairs drawn from the same task and from different tasks, respectively. The FOCAL loss~\citep{li2020focal} pulls latents of the same task together and pushes latents of different tasks apart,
\[
  \cL_\text{FOCAL} \;=\; \frac{1}{|\mathcal{P}|}\sum_{(i,j)\in\mathcal{P}} \sqrt{\delta_{ij} + \epsilon} \;+\; \frac{1}{|\mathcal{N}|}\sum_{(i,j)\in\mathcal{N}} \frac{1}{\delta_{ij} + \epsilon'}.
\]
CLUB~\citep{cheng2020club} fits a variational network $q_\varphi(\zz \mid x) = \cN\bigl(\mu_\varphi(x), \mathrm{diag}\,\sigma^2_\varphi(x)\bigr)$ to the encoder's latents by maximum likelihood, where $x$ is $(s, a)$, or $(s, a, r)$ when $s'$ is part of the context, and the encoder minimizes the resulting upper bound on $\cI(\zz; x)$,
\[
  \hat\cI_\text{CLUB} \;=\; \frac{1}{N}\sum_{i} \log q_\varphi(\zz_i \mid x_i) \;-\; \frac{1}{N^2}\sum_{i,j} \log q_\varphi(\zz_i \mid x_j),
\]
which removes from $\zz$ the information that $(s, a)$ alone carries about the behavior policy. The actor--critic follows BRAC~\citep{wu2019behavior} with the dual form of the KL divergence: with a critic $c_\omega(s, a, \zz)$ and $g(x) = \log(1 + e^{x})$,
\[
  D_\text{BRAC}(s, a', a; \zz) \;=\; \log g\bigl(c_\omega(s, a', \zz)\bigr) - g\bigl(c_\omega(s, a, \zz)\bigr) + 1
\]
for a policy action $a' \sim \pi_\theta$ and a dataset action $a$. The critic $c_\omega$ maximizes this estimate with a gradient penalty, the actor minimizes $\alpha_\text{BRAC} D_\text{BRAC}$ as part of \Cref{eq:actor_real}, and $\alpha_\text{BRAC}$ is adapted by dual gradient descent toward a small target divergence.

\paragraph{PLATO-specific hyperparameters.}
The edge-emphasis exponent in \Cref{eq:lore} is fixed at $p = 2$ across all environments. Four hyperparameters take per-environment values: the perturbation magnitude $\alpha$, the pessimism strength $\lambda_\text{pess}$, the rollout horizon $H$, and the warm-up $T_\text{warm}$; \Cref{tab:hp_env} lists their values. The synthetic-loss weight $\beta_\text{synth}$ in \Cref{eq:actor_synth} is set jointly with $H$: a smaller $H$ uses a smaller $\beta_\text{synth}$, which keeps the synthetic gradient on a scale comparable to that of the real-data gradient. The entropy temperature $\eta$ of \Cref{eq:actor_real,eq:actor_synth}, which also enters the soft value target, is fixed at $\eta = 1$ in all environments except Hopper-Mass and Walker-Mass. On these two, it grows with the decoder disagreement as $\eta = \eta_0 (1 + \kappa\,\tilde u)$, where $\tilde u$ is the disagreement $u$ of \Cref{eq:u} standardized by exponential moving averages of its batch mean and variance (decay $0.99$), rectified, clipped at $5$, and held at zero for the first $5$K steps; $(\eta_0, \kappa) = (1.0, 1.8)$ on Hopper-Mass and $(1.5, 0.7)$ on Walker-Mass. Sensitivity around the chosen $\alpha$, $\lambda_\text{pess}$, and $T_\text{warm}$ is reported in \Cref{app:sens}.

\paragraph{Training budget and reporting.}
Each run trains for $200{,}000$ policy-gradient updates ($100$ outer iterations of $2{,}000$ inner steps), and all eight environments use this identical budget. Final-iteration metrics are computed by averaging $5$ evaluation rollouts per task and then averaging across $5$ random seeds. Out-of-support transitions are loaded only at evaluation time and never enter any training gradient.

\begin{table}[t]
\centering\small
\begin{minipage}[t]{0.46\textwidth}
\centering
\caption{\textbf{Shared hyperparameters across all environments.}}
\label{tab:hp_shared}
\begin{tabular}{lc}
\toprule
Hyperparameter & Value \\
\midrule
Latent dimension $d$                 & $20$ \\
Encoder MLP                          & $3 \times 200$ \\
Decoder MLP (per ensemble member)    & $3 \times 256$ \\
Critic / value / policy MLP          & $3 \times 256$ \\
Decoder ensemble size $M$            & $10$ \\
Bootstrap fraction $\rho$            & $0.8$ \\
Edge-emphasis exponent $p$           & $2$ \\
Discount factor $\gamma$             & $0.99$ \\
Soft-update rate $\tau$              & $0.005$ \\
Meta-batch size $B_T$                & $16$ \\
Transition batch size                & $256$ \\
Context batch size                   & $64$ \\
Reward scale                         & $5.0$ \\
Learning rate (default)              & $3 \times 10^{-4}$ \\
Optimizer                            & Adam \\
Total training steps                 & $200{,}000$ \\
\bottomrule
\end{tabular}
\end{minipage}
\hfill
\begin{minipage}[t]{0.5\textwidth}
\centering
\caption{\textbf{Per-environment hyperparameters.}}
\label{tab:hp_env}
\begin{tabular}{lcccc}
\toprule
Environment & $\alpha$ & $\lambda_\text{pess}$ & $H$ & $T_\text{warm}$ \\
\midrule
Ant-Dir          & $1.5$  & $4.0$  & $3$ & $10$\,K \\
Ant-Goal         & $1.5$  & $4.0$  & $3$ & $10$\,K \\
Cheetah-Vel      & $1.0$  & $2.5$  & $3$ & $10$\,K \\
Humanoid-Dir     & $1.5$  & $2.5$  & $3$ & $10$\,K \\
\midrule
Hopper-Mass      & $0.1$  & $5.0$  & $2$ & $10$\,K \\
Hopper-Friction  & $0.1$  & $5.0$  & $2$ & $25$\,K \\
Walker-Mass      & $0.7$  & $1.0$  & $5$ & $10$\,K \\
Walker-Friction  & $0.3$  & $2.5$  & $3$ & $10$\,K \\
\bottomrule
\end{tabular}
\end{minipage}
\end{table}

\section{Computational Cost}
\label{app:compute}

\paragraph{Hardware.}
All training runs were launched on a single shared workstation with $6$ NVIDIA RTX $3090$ GPUs ($24$\,GB VRAM each), two AMD EPYC $7452$ processors ($64$ physical cores and $128$ threads in total), and $503$\,GB of system RAM. Each run uses a single GPU, and up to six runs are time-multiplexed concurrently across the workstation.

\paragraph{Per-run wall-clock time.}
Wall-clock time per training run varies with environment complexity and with contention on the shared GPUs. After trimming the slowest $10$\% of seeds per cell to remove obvious contention outliers, the median PLATO times across the $5$ seeds were approximately:
\begin{itemize}\itemsep1pt
  \item \emph{Reward-shift:} Cheetah-Vel $\sim$$7$\,h, Ant-Goal $\sim$$18$\,h, Ant-Dir $\sim$$28$\,h, Humanoid-Dir $\sim$$30$\,h.
  \item \emph{Dynamics-shift:} Hopper-Mass, Walker-Mass, and Walker-Friction $\sim$$22$\,h; Hopper-Friction $\sim$$24$\,h.
\end{itemize}
The total time per environment is dominated by the cost of environment steps: Cheetah-Vel uses half as many simulation steps per outer iteration as the other environments and is therefore the fastest, while the $376$-dimensional state of Humanoid-Dir makes its critic forward and backward passes the slowest.

\paragraph{Overhead vs.~baselines.}
Compared to CSRO~\citep{gao2023csro}, the strongest baseline in our experiments and the closest in design (PLATO inherits the CSRO encoder and the BRAC actor--critic), PLATO incurs roughly a $5$--$15$\% wall-clock overhead from the $H$-step decoder-ensemble rollout and the per-step uncertainty computation. Precise per-environment numbers are masked by contention on the shared GPUs but cluster around $+10$\% after outlier removal. The overhead is constant in the number of training tasks and does not grow with the out-of-support evaluation set, since out-of-support transitions are never used during training.

\paragraph{Memory.}
Peak GPU memory during training is dominated by the forward and backward passes of the decoder ensemble ($M = 10$ members at meta-batch size $B_T = 16$) and stays well under $5$\,GB, so six concurrent training runs fit comfortably on the six $24$\,GB GPUs of the workstation. Meta-test inference uses only the encoder and the policy and consumes well under $1$\,GB.

\section{Sensitivity Analyses}
\label{app:sens}

The hyperparameters reported in \Cref{tab:hp_env} are the per-environment values used for the main result tables (\Cref{tab:id,tab:ood}). To probe PLATO's robustness around these choices, we run a separate sensitivity study in which each of the three main hyperparameters (the perturbation magnitude $\alpha$ of \Cref{eq:lore}, the pessimism strength $\lambda_\text{pess}$ of \Cref{eq:weight}, and the warm-up $T_\text{warm}$ that delays the onset of imagined rollouts) is swept over three values, applied uniformly across environments. We report out-of-support learning curves on four representative environments, two from each shift family: Ant-Goal and Cheetah-Vel (reward shift), and Hopper-Mass and Walker-Friction (dynamics shift). In each sweep, the other two hyperparameters are held fixed at their middle swept value. Curves show the mean over 5 seeds.

\Cref{fig:sensitivity} reports the result. Overall, PLATO is broadly insensitive to all three hyperparameters within reasonable ranges: final returns mostly cluster within one seed standard deviation across the swept values. A few patterns are nevertheless worth noting.

\begin{figure}[!htb]
\centering
\includegraphics[width=\linewidth]{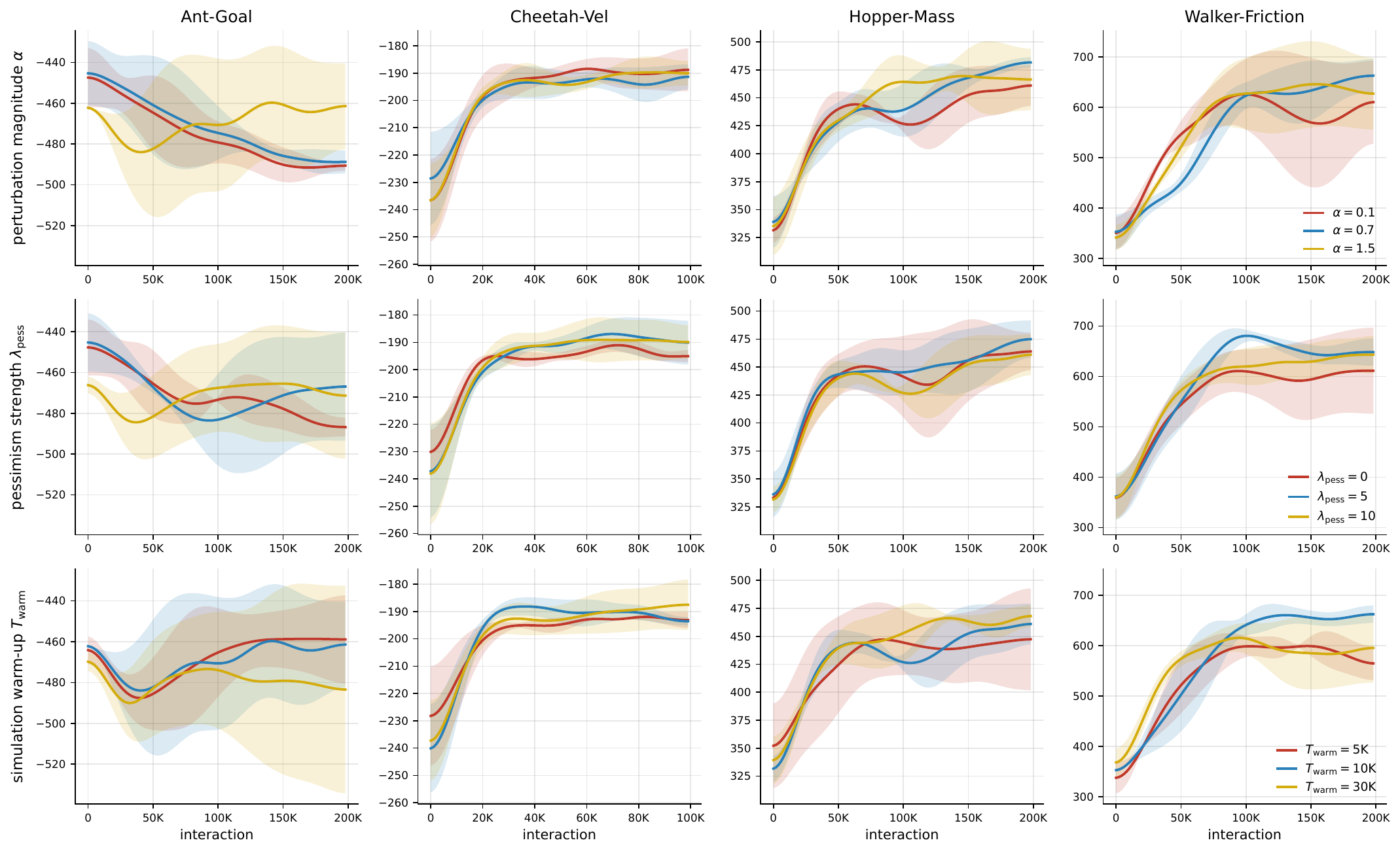}
\caption{\textbf{Out-of-support sensitivity analysis.} Each row sweeps one hyperparameter, with the other two held at their middle swept value. Top: perturbation magnitude $\alpha \in \{0.1, 0.7, 1.5\}$. Middle: pessimism strength $\lambda_\text{pess} \in \{0, 5, 10\}$. Bottom: warm-up $T_\text{warm} \in \{5\text{K}, 10\text{K}, 30\text{K}\}$. Columns: Ant-Goal and Cheetah-Vel (reward-shift), Hopper-Mass and Walker-Friction (dynamics-shift). Curves show mean out-of-support return over 5 seeds.}
\label{fig:sensitivity}
\end{figure}

\paragraph{Perturbation magnitude $\alpha$ (top row).}
The reach of the perturbation governs both the off-support distance attained by $\zz^\text{aug}$ (\Cref{prop:outward}) and the strength of the resulting disagreement signal (\Cref{rem:lore_bound}). In most environments, the three swept values produce comparable returns, but Ant-Goal is a clear exception: $\alpha=1.5$ yields a substantially higher final return than $\alpha=0.7$ or $\alpha=0.1$. We attribute this to the geometry of the Ant-Goal task family. Held-out goals lie at a strictly larger radius than training goals \emph{and} in a different angular sector, so the out-of-support latents are displaced from the training centroid in two directions simultaneously. A larger $\alpha$ pushes $\zz^\text{aug}$ far enough to overlap with this doubly displaced out-of-support region, whereas smaller values land in the interpolation region between training tasks and miss the held-out region entirely. The other three environments hold out tasks along a single axis (forward velocity for Cheetah-Vel and a one-dimensional physics parameter for Hopper-Mass and Walker-Friction), so the perturbation reach matters less. The pattern is consistent with \Cref{prop:gengap}: the out-of-support generalization gap is bounded by the perturbation reach, and the bound becomes informative only when the reach is large enough to cover the held-out region.

\paragraph{Pessimism strength $\lambda_\text{pess}$ (middle row).}
The middle row sweeps the pessimism strength of \Cref{eq:weight}. Disabling pessimism ($\lambda_\text{pess}=0$) and over-emphasizing it ($\lambda_\text{pess}=10$) both perform slightly worse than the moderate setting $\lambda_\text{pess}=5$, which gives the best returns most consistently across the four environments. The two failure modes are intuitive. At $\lambda_\text{pess}=0$, the policy treats every imagined transition as fully reliable and absorbs decoder errors at $\zz^\text{aug}$, where the ensemble was never trained. At $\lambda_\text{pess}=10$, the weight $w_h$ shrinks too aggressively even on transitions with moderate disagreement, suppressing the out-of-support training signal. The intermediate value strikes the balance the algorithm is designed to exploit.

\paragraph{Warm-up $T_\text{warm}$ (bottom row).}
$T_\text{warm}$ delays imagined rollouts until the decoder ensemble has stabilized, and its effect is mild. On Cheetah-Vel and Hopper-Mass the three values largely overlap, with the longest warm-up marginally ahead at the end of training. On Ant-Goal the longest warm-up ($30$K) ends lowest and with the largest spread across seeds, while on Walker-Friction the middle value ($10$K) is clearly best. No setting dominates across environments, and the $10$K warm-up that these four environments use in the main experiments is never far from the best value in the sweep.

\paragraph{Ensemble size $M$.}
PLATO uses $M = 10$ decoders (\Cref{sec:backbone}). We study the ensemble size in two ways. First, without retraining, we evaluate the trained $M = 10$ ensembles of all eight environments using random subsets of $M' \in \{2, 3, 5, 7, 10\}$ members: the separation between training and out-of-support tasks under $u$, measured by the area under the receiver operating characteristic curve (AUROC) of the per-task disagreement, changes by at most $0.02$ between $M' = 10$ and $M' = 2$ in every environment. A small ensemble therefore preserves the disagreement signal on average; what it loses is the reliability of a single draw, as the AUROC varies more across random subsets when $M'$ is small. Second, we retrain PLATO with $M \in \{3, 5\}$ on three environments (\Cref{tab:ens_size}). Out-of-support return stays within $6\%$ of the $M = 10$ value on Cheetah-Vel and Hopper-Mass, and $M = 5$ matches $M = 10$ on Walker-Friction; the main effect of a small ensemble is a larger spread across seeds, in line with a less reliable disagreement estimate. Since meta-testing uses no ensemble (\Cref{app:algorithm}) and $M = 10$ costs about $10\%$ additional training time (\Cref{app:compute}), we keep $M = 10$.

\begin{table}[!htb]
\caption{\textbf{Out-of-support return for different ensemble sizes $M$.} $M = 10$ is the main configuration (\Cref{tab:ood}, $5$ seeds); $M \in \{3, 5\}$ are averaged over $3$ seeds.}
\label{tab:ens_size}
\centering\small
\begin{tabular}{lccc}
\toprule
$M$ & Cheetah-Vel & Hopper-Mass & Walker-Friction \\
\midrule
$3$  & $-186 \pm 12$ & $479 \pm 36$ & $564 \pm 145$ \\
$5$  & $-187 \pm 2$  & $478 \pm 8$  & $655 \pm 53$ \\
$10$ & $-179 \pm 11$ & $508 \pm 26$ & $649 \pm 27$ \\
\bottomrule
\end{tabular}
\end{table}

\paragraph{Summary.}
PLATO's three hyperparameters are mostly soft choices: across the four environments and the swept ranges, final returns rarely differ by more than one seed standard deviation. Two patterns deserve attention. First, $\alpha$ matters when the held-out task region is distant in several directions at once, as in Ant-Goal, where a larger reach pays off. Second, $\lambda_\text{pess}$ at extreme values ($0$ or $10$) underperforms the moderate setting. $T_\text{warm}$ shows mild environment-dependent trade-offs, with no clearly dominant setting. The ensemble size behaves similarly: smaller ensembles leave the disagreement signal and the out-of-support return largely unchanged and mainly increase variability across seeds.

\section{Learning Curves}
\label{app:curves}

\Cref{fig:curves_id,fig:curves_ood} report the learning curves of PLATO and the five baselines on the eight MuJoCo OMRL benchmarks, for in-distribution and out-of-support evaluation, respectively. Curves are averaged across 5 seeds, and shaded regions indicate one standard deviation.

\begin{figure}[!htb]
\centering
\includegraphics[width=\linewidth]{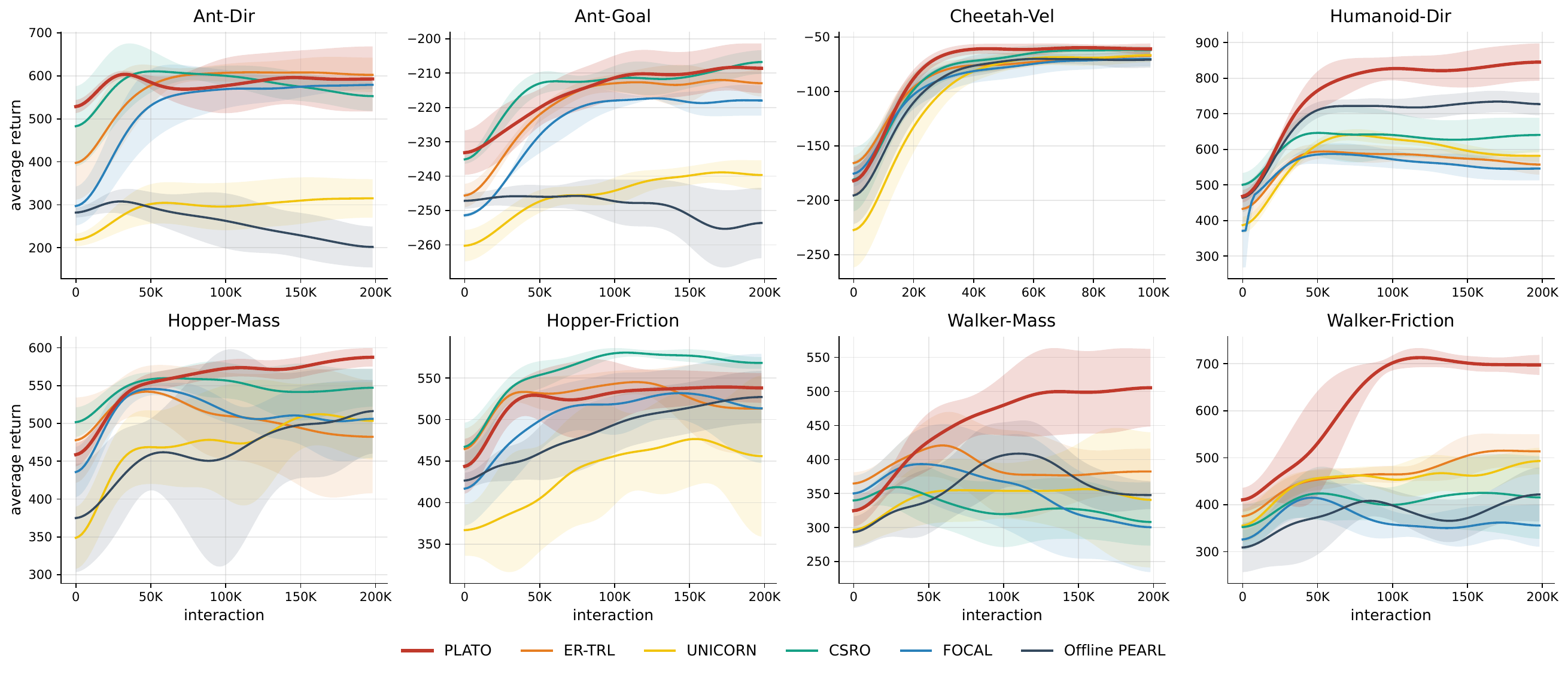}
\caption{\textbf{Learning curves on the eight MuJoCo OMRL benchmarks, in-distribution.} Each panel shows normalized return over training, averaged across 5 seeds; shaded regions indicate one standard deviation.}
\label{fig:curves_id}
\end{figure}

\begin{figure}[!htb]
\centering
\includegraphics[width=\linewidth]{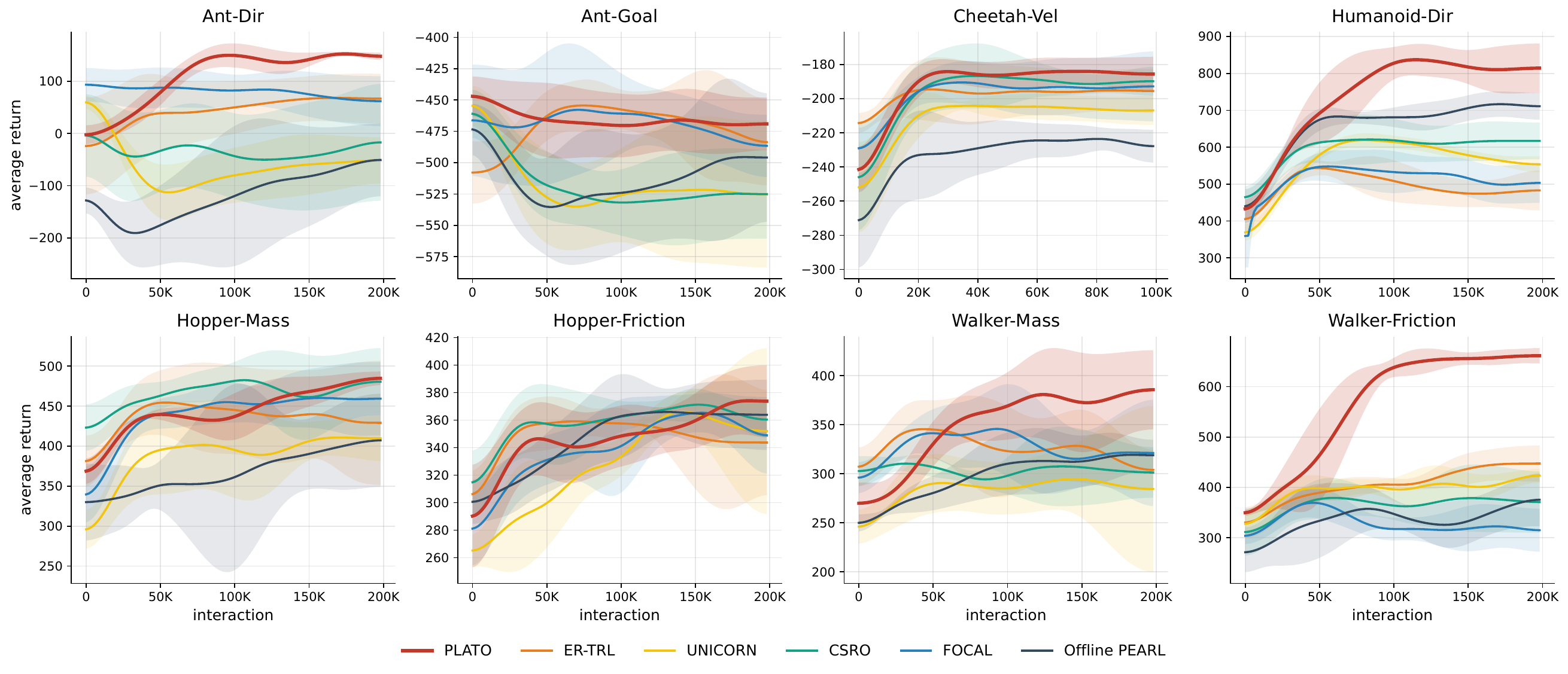}
\caption{\textbf{Learning curves on the eight MuJoCo OMRL benchmarks, out-of-support.} Same format as \Cref{fig:curves_id}.}
\label{fig:curves_ood}
\end{figure}

\section{Additional Domain: MetaWorld Reach}
\label{app:metaworld}

To evaluate PLATO beyond locomotion, we add MetaWorld ML1 Reach~\citep{yu2019metaworld}, a state-based manipulation task in which a robot arm must bring its end effector to a goal position in 3D. Its $20$ training tasks and $10$ out-of-support tasks come from a grid over the goal box with the periphery held out, so every held-out goal lies outside the region covered by the training goals (\Cref{app:env-manip}). Offline data are collected per task with SAC as in \Cref{app:data}, and all six methods share the datasets, training budget, and evaluation protocol of the main experiments. We report the MetaWorld success rate (the fraction of evaluation episodes that reach the goal) and the average return; the numbers below are the mean $\pm$ standard deviation over $3$ seeds at the last iteration, as in \Cref{tab:ood,tab:id}.

\begin{figure}[!htb]
\centering
\includegraphics[width=\linewidth]{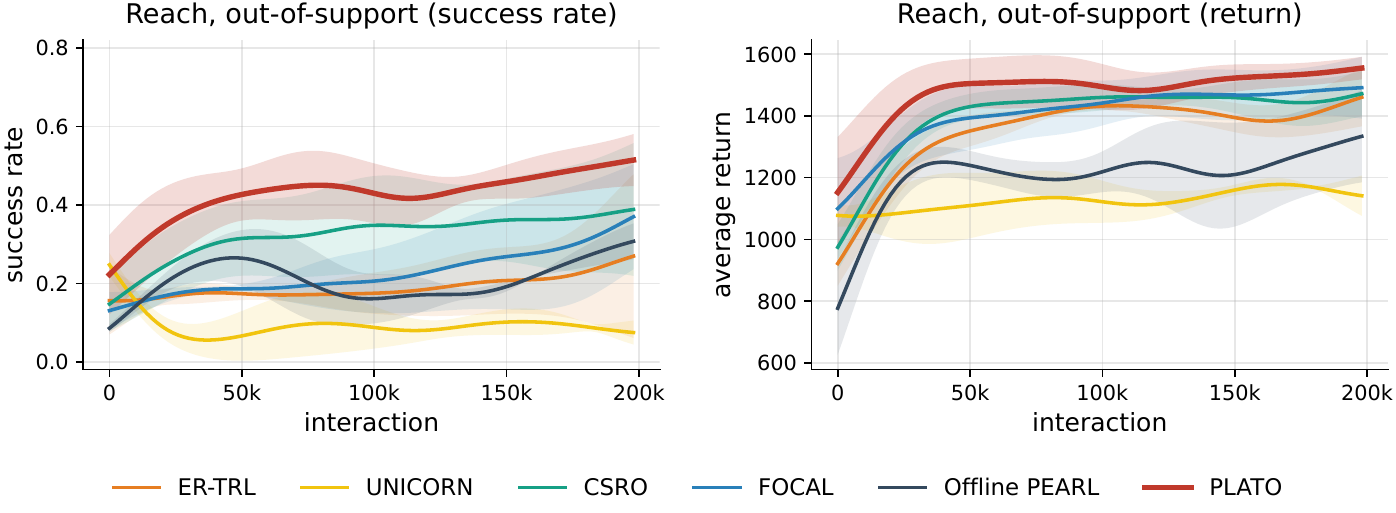}
\caption{\textbf{MetaWorld Reach, out-of-support tasks.} Success rate (left) and average return (right) over training, averaged over $3$ seeds; shaded regions indicate one standard deviation.}
\label{fig:metaworld}
\end{figure}

\Cref{fig:metaworld} shows the learning curves on the out-of-support tasks. PLATO attains the highest out-of-support success rate ($0.53 \pm 0.09$, against $0.43 \pm 0.13$ for the strongest baseline, FOCAL) and the highest out-of-support return ($1575 \pm 39$, against $1506 \pm 78$ for CSRO), with the smallest seed spread in return among all methods, and it leads for most of training rather than only at the final iteration. As on MuJoCo, the out-of-support gain comes while PLATO remains competitive in distribution: on the training tasks, PLATO matches the best return ($1692 \pm 28$, against $1693 \pm 90$ for FOCAL) with the smallest seed spread, and its success rate ($0.60 \pm 0.11$) is close to those of the strongest baselines ($0.65 \pm 0.07$ for FOCAL and $0.67 \pm 0.24$ for Offline PEARL). When we extrapolate further, to the corners of the goal box at about $1.9\times$ the training radius, the success rate of every method falls to at most $0.10$, although PLATO still attains the highest return there ($1108$, against $1083$ for FOCAL). This is consistent with the view that PLATO extends the range over which extrapolation remains reliable rather than removing its limit.

\section{Comparison with MerPO}
\label{app:merpo}

MerPO~\citep{lin2022merpo} is the model-based offline meta-RL method closest to PLATO: it learns a model of the task family, mixes imagined transitions from this model into the Bellman target of a conservative policy optimization, and adapts the policy to each new task with gradient steps on that task's offline data. Its within-task policy optimization generalizes COMBO~\citep{yu2021combo}, which combines model-based policy optimization with the conservative policy evaluation of CQL~\citep{kumar2020conservative}, to the multi-task setting. As no official implementation is available, we ported the authors' code from the OpenReview supplementary material into our codebase. Its data loader reads the FOCAL format, so MerPO trains on exactly the datasets and task splits used by PLATO. We kept its two-stage training pipeline and the authors' per-environment hyperparameters, changing only what prevented the released code from running on non-contiguous task splits. Because MerPO takes gradient steps on test-task data at evaluation time, whereas PLATO infers each task from a context batch without any gradient step, we treat MerPO as a test-time-adaptation baseline.

\begin{table}[!htb]
\caption{\textbf{Comparison with MerPO.} Returns on the in-distribution and out-of-support splits, mean $\pm$ standard deviation over 6 seeds at the last epoch for MerPO; the PLATO values are those of \Cref{tab:ood,tab:id}. Boldface marks the higher average return in each pair.}
\label{tab:merpo}
\centering\small\setlength{\tabcolsep}{8pt}
\begin{tabular}{lcccc}
\toprule
 & \multicolumn{2}{c}{In-distribution} & \multicolumn{2}{c}{Out-of-support} \\
\cmidrule(lr){2-3}\cmidrule(lr){4-5}
Environment & MerPO & PLATO & MerPO & PLATO \\
\midrule
Ant-Dir          & \textbf{678$\pm$34} & 617$\pm$29 & 70$\pm$18 & \textbf{139$\pm$25} \\
Ant-Goal         & -225$\pm$12 & \textbf{-215$\pm$10} & -481$\pm$20 & \textbf{-472$\pm$20} \\
Humanoid-Dir     & 652$\pm$51 & \textbf{858$\pm$43} & 577$\pm$68 & \textbf{806$\pm$97} \\
\midrule
Hopper-Mass      & \textbf{602$\pm$19} & 584$\pm$12 & \textbf{516$\pm$24} & 508$\pm$26 \\
Hopper-Friction  & \textbf{576$\pm$33} & 526$\pm$42 & 329$\pm$27 & \textbf{369$\pm$29} \\
Walker-Mass      & 463$\pm$41 & \textbf{507$\pm$53} & 328$\pm$47 & \textbf{401$\pm$54} \\
Walker-Friction  & 494$\pm$22 & \textbf{682$\pm$18} & 464$\pm$31 & \textbf{649$\pm$27} \\
\bottomrule
\end{tabular}
\end{table}

\Cref{tab:merpo} compares the two methods on seven of the eight MuJoCo environments. In distribution, MerPO attains the higher return on Ant-Dir and the two Hopper environments, consistent with its test-time adaptation, and PLATO on the other four. Out of support, PLATO is ahead by more than two standard deviations on Ant-Dir, Humanoid-Dir, and Walker-Friction and by more than one on Hopper-Friction and Walker-Mass, and the two methods are within one standard deviation of each other on Ant-Goal and Hopper-Mass. PLATO's out-of-support advantage therefore holds against a baseline that adapts to each test task with gradient steps, although PLATO itself takes no gradient step at test time. We omit Cheetah-Vel, on which MerPO did not train stably under the authors' hyperparameters or three further configurations we tried.

\section{Limitations}
\label{app:limitations}

\paragraph{Where the manifold assumption holds.}
\Cref{asm:manifold} is a sufficient condition for the outward direction to point toward held-out tasks, and it is not vacuous. In an annulus configuration, held-out tasks lie inside an outer training envelope rather than beyond it, so the centroid-outward direction is miscalibrated. We constructed this case on Ant-Goal by training on the outer goal radii and holding out the inner one. As expected, the direction then steers synthetic coverage away from the held-out goals: among the variants we compared, PLATO attains the highest return on the outer training goals, while on the inner held-out goals a random perturbation direction performs better and PLATO stays within the seed spread of its unperturbed CSRO backbone. Mild violations, such as Walker-Mass in \Cref{app:env-pca}, do not cost PLATO its advantage; a full inversion of the task geometry does. The disagreement signal does not flag such an inversion from training data alone, since probing $u$ along latent rays shows it rising outward but staying flat inward. An annulus-type geometry must therefore be ruled out by domain knowledge, and PLATO's stated guarantees do not transfer to it.

We argue, however, that the outward configuration is the common case rather than a special one. Out-of-support splits in the meta-RL literature are constructed by holding out the periphery of a parameter range, which is the natural way for a practitioner to test extrapolation. The same geometry recurs in real applications: a household robot trained on the mass range of typical kitchenware will encounter heavier outliers, not lighter-than-trained ones; a delivery robot tuned for nominal floor friction will face unusually slippery or unusually rough surfaces, not interpolations between them; and a manipulation policy trained on a band of object sizes will need to extrapolate to larger or smaller objects. In each case, the held-out tasks sit on the outer boundary of the training distribution. PLATO targets this regime directly, and we view the outward assumption as a reasonable description of how out-of-support generalization is encountered in practice rather than a contrived simplification.

\paragraph{Other limitations.}
The centroid $c$ and the spread $\sigma_d$ are estimated from a finite mini-batch and inherit its estimation noise. The reach $\alpha$ and the horizon $H$ trade coverage against rollout drift: small values make the simulated out-of-support tasks indistinguishable from training tasks, and large values make the rollout diverge from the true dynamics. \Cref{thm:decoupling} formalizes the bias of training the critic on imagined data, but the trade-off is environment-dependent, and we treat the inclusion of imagined data in the critic as an empirical knob. \Cref{thm:nonvanish} is a conditional guarantee: we verify its conclusion empirically (\Cref{app:proof_nonvanish}) rather than derive its margin assumption. Extrapolation also has a magnitude limit: far beyond the training range, as at the corners of the MetaWorld goal box, every method including PLATO fails (\Cref{app:metaworld}), so PLATO extends the range over which extrapolation remains reliable rather than removing its limit. Our main benchmarks are MuJoCo locomotion tasks, complemented by one manipulation task (\Cref{app:metaworld}); vision-based domains and transfer across qualitatively different task families remain for future work.

\section{Broader Impact}
\label{app:broader_impact}

PLATO improves the robustness of offline meta-RL policies on tasks beyond the range of the training tasks. The motivating applications, including robotics, healthcare, and autonomous systems, are domains in which deployed policies routinely encounter situations beyond the training distribution and in which unsafe online interaction is a primary reason to choose offline meta-RL in the first place. By reducing the gap between training-time exposure and deployment-time conditions, methods in this line aim to make such systems more reliable.

The same property carries dual-use risks common to any work that improves policy generalization: a policy that adapts more confidently to unfamiliar situations can also fail more confidently when the situation lies beyond what the method's assumptions support. PLATO is explicit about this regime: the analysis in \Cref{sec:analysis} bounds the out-of-support generalization gap by the perturbation reach, not by the ambient task space. Practitioners deploying offline meta-RL agents in safety-critical settings should therefore treat the off-support guarantees as local rather than universal and combine them with appropriate runtime safeguards.

\end{document}